\documentclass{article}
\pdfoutput=1

\usepackage{arxiv}
\usepackage[T1]{fontenc}
\usepackage{lmodern}
\usepackage{anyfontsize}

\usepackage{pifont}
\providecommand{\mg}{m g}

\newcommand{\equidexflow}{\textup{\textsc{EquiDexFlow}}}
\newcommand{\shorttitle}{{\equidexflow}: SE(3)-Equivariant Dexterous Grasp Generative Flows}
\title{{\equidexflow}: Contact-Grounded SE(3)-Equivariant Dexterous Grasp Generative Flows}

\author{
   Clinton Enwerem,\, John S. Baras,\, Calin Belta \\Institute for Systems Research, \\University of Maryland, College Park, MD, U.S.A.\\ {Emails: \{\texttt{enwerem, baras, calin}\}@umd.edu}
}

\renewcommand{\today}{}

\usepackage{amsmath,amssymb,amsfonts,amsthm,mathtools}
\usepackage{graphicx}
\usepackage{booktabs}
\usepackage[para]{footmisc}
\usepackage[dvipsnames]{xcolor}
\usepackage{colortbl}
\usepackage{siunitx}
\usepackage[colorlinks=true,allcolors=blue,urlcolor=blue]{hyperref}
\usepackage{bookmark}
\usepackage{cleveref}
\newtheorem{remark}{Remark}
\newtheorem{proposition}{Proposition}
\newtheorem{theorem}{Theorem}
\newtheorem{definition}{Definition}
\newtheorem{problem}{Problem}
\usepackage{caption}
\usepackage{subcaption}
\usepackage{makecell}
\usepackage{wrapfig}
\usepackage{enumitem}
\usepackage{url}
\usepackage{titlesec}

\renewcommand{\thesection}{\Roman{section}}
\renewcommand{\thesubsection}{\Alph{subsection}}
\renewcommand{\thesubsubsection}{\arabic{subsubsection})}
\titleformat{\section}[block]
  {\normalfont\normalsize\scshape\centering}
  {\thesection.}{0.55em}{}
\titleformat{\subsection}[block]
  {\normalfont\itshape}
  {\thesubsection.}{0.5em}{}
\titleformat{\subsubsection}[runin]
  {\normalfont\itshape}
  {\thesubsubsection}{0.35em}{}[:]
\titlespacing*{\section}{0pt}{8pt plus 1pt minus 1pt}{3pt plus 1pt minus 1pt}
\titlespacing*{\subsection}{0pt}{5pt plus 1pt minus 1pt}{1pt plus 1pt minus 1pt}
\titlespacing*{\subsubsection}{0pt}{3pt plus 1pt minus 1pt}{0.5em}
\titlespacing*{\paragraph}{0pt}{2pt plus 1pt minus 1pt}{0.5em}

\expandafter\def\expandafter\normalsize\expandafter{%
    \normalsize
    \setlength{\abovedisplayskip}{3pt}
    \setlength{\belowdisplayskip}{3pt}
    \setlength{\abovedisplayshortskip}{3pt}
    \setlength{\belowdisplayshortskip}{3pt}
}

\newcommand{\R}{\mathbb{R}}
\newcommand{\G}{\bar{G}}
\newcommand{\SE}{\operatorname{SE}}
\newcommand{\SO}{\operatorname{SO}}
\newcommand{\dataset}{\mathcal{D}}
\newcommand{\graspmap}{\mathcal{G}}

\Crefformat{figure}{Figure~#2#1#3}
\Crefformat{table}{Table~#2#1#3}
\Crefformat{section}{Section~#2#1#3}
\Crefformat{subsection}{Section~#2#1#3}
\Crefformat{equation}{(#2#1#3)}
\Crefformat{proposition}{Proposition~#2#1#3}
\Crefformat{theorem}{Theorem~#2#1#3}
\Crefformat{remark}{Remark~#2#1#3}

\makeatletter
\renewcommand{\p@subsection}{\thesection.}
\renewcommand{\p@subsubsection}{\thesection.\Alph{subsection}.}
\makeatother

\definecolor{cRowOurs}{rgb}{0.886,0.945,0.918}
\definecolor{cRowOff}{rgb}{0.976,0.910,0.894}

\begin{document}
\maketitle

\begin{figure}[htb]
\centering
\vspace{-20pt}
\includegraphics{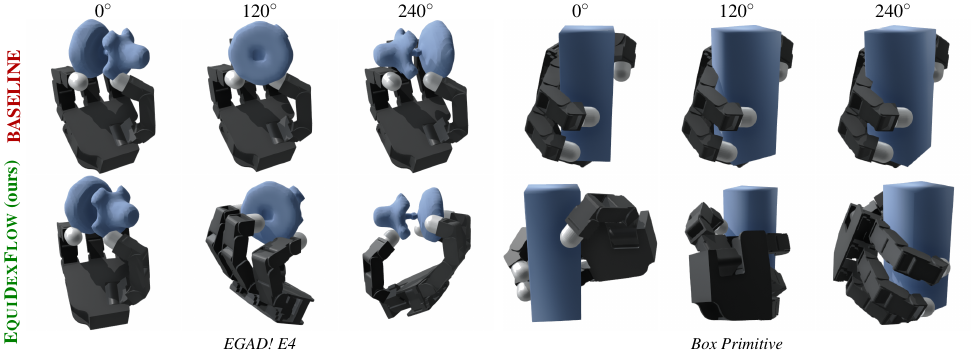}
\caption{\textbf{SE(3)-Equivariant Dexterous Grasps Generated by \equidexflow{}.}
EGAD!~\cite{morrisonEGADEvolvedGrasping2020b} E4 (\emph{left three}) and a box primitive (\emph{right three}) under \(0^\circ\), \(120^\circ\), and \(240^\circ\) rotations about the vertical axis.
\emph{Top}: naively applying the canonical grasp to a rotated object leaves fingertips off the surface or causes severe penetration. \emph{Bottom}: \equidexflow{} co-rotates the wrist pose and finger
configuration with the object, preserving contact placement under \(\SE(3)\) action.}
\label{fig:teaser}
\end{figure}

\begin{abstract}
Most learned dexterous grasp generators relegate contact forces to a downstream verification step, so a kinematically-plausible pose can still violate the conditions for a stable physical grasp. We address this with \equidexflow{}, an SE(3)-equivariant flow-matching model that jointly predicts wrist pose, joint angles, fingertip contacts, surface normals, and contact forces from an object point cloud. Our architecture projects contacts onto the object surface and forces into the Coulomb friction cone by construction, so placement and friction compliance hold without loss penalties. We prove end-to-end SE(3) equivariance and verify it empirically over 200 rotations, with wrist residuals below \(0.04^\circ\) and exactly zero joint deviation. Trained on 8,100 force-closure grasps across 81 objects for the 16-DoF Allegro Hand, our model achieves zero friction violations, the best composite score, and the lowest wrench residual among all ablation variants. We retarget decoded fingertip contacts to a 16-DoF LEAP Hand via per-finger inverse kinematics, and our hardware-feasible refinement places every joint at least \(5\%\) inside its actuator envelope while preserving wrench balance. On the physical robot, retargeted \equidexflow{}-decoded grasps complete open-loop pick-and-hold trials on all six test objects, with every asymmetric object succeeding at both the canonical pose and a \(120^\circ\) co-rotation. Videos, code, and checkpoints are available at \href{https://equidexflow.github.io}{\texttt{equidexflow.github.io}}.
\end{abstract}

\section{Introduction}\label{sec:intro}
To reliably grasp and manipulate an object, modern multi-fingered dexterous manipulation systems require not only a hand configuration that places the hand's fingertips on the manipuland's surface, but also a contact structure and force distribution that together resist gravity, satisfy friction constraints, and remain robust to small perturbations. Existing learned generators rely on architectures that decouple pose and force: the pose head learns without any notion of admissible force directions, while a downstream force optimizer assumes a contact geometry that deviates from the pose head's training-time preferences. As a consequence, such generators often produce geometrically plausible wrist poses that fail force closure or violate friction limits. 

We instead learn the coupling directly: given an object point cloud \(\mathcal{P}\), a single \(\theta\)-parameterized model captures the joint conditional distribution \(p_\theta(q_h, C, F \mid \mathcal{P})\) over the hand configuration \(q_h\), the contact set \(C\), and the contact force field \(F\). These components constrain one another, since the hand configuration determines reachable contacts, the contact geometry restricts admissible force directions, and the resulting forces decide physical stability. Modeling the three jointly lets a single forward pass produce grasps that are diverse, geometrically consistent, and physically grounded. We build on two complementary ideas following~\cite{lim2025equigraspflow}: \emph{SE(3) equivariance} (\Cref{fig:teaser}), achieved by parameterizing all representations with Vector-Neuron (VN) layers~\cite{deng2021vn} that preserve \(\SO(3)\) structure throughout the learning pipeline, and \emph{flow matching}~\cite{lipman2023flow} on the Lie group \(\SE(3)\), which formulates wrist pose generation as an ODE flow integrated with a Munthe-Kaas RK4 solver~\cite{munthekaas1998rungekutta} rather than diffusion or score matching~\cite{song2021scorebased}.

The resulting system, \emph{\equidexflow}, takes an object point cloud and a kinematic model of a \(D\)-DoF, \(M\)-fingered robotic hand and produces, in a single forward pass, a wrist \(\SE(3)\) pose, \(D\) joint angles from a conditional normalizing flow, and \(M\) fingertip contacts projected onto the object's surface, with contact forces projected into the friction cone, all jointly consistent with the learned distribution. Trained on force-closure-certified grasps from FRoGGeR~\cite{li2023frogger}, with physics losses enforcing wrench balance, friction feasibility, and collision avoidance, \equidexflow{} generates \(K\) grasp candidates ranked by a composite quality score.\\[-1em] 

\textbf{Contributions:}
We summarize the paper's main contributions as follows.
\begin{enumerate}[leftmargin=*, labelsep=0.5em]
\item \textbf{Force-Aware Grasp Generation:} \equidexflow{} jointly predicts hand configuration, fingertip contacts, and contact forces from a point cloud in one forward pass, without post-hoc recovery.
\item \textbf{Architectural Physics Guarantees:} We introduce two physics-preserving mechanisms in \equidexflow{}'s decoder heads (\Cref{ssec:archov}): a cone projection in the force decoder that keeps every decoded force inside the Coulomb friction cone, and a differentiable surface projection in the contact decoder that keeps contacts on the object's surface. Both guarantees hold by construction for every \equidexflow{} ablation variant.
\item \textbf{End-to-End SE(3) Equivariance:} We prove that \equidexflow{}'s pipeline is SE(3)-equivariant by construction (\Cref{thm:equiv}) and verify the result empirically: a wrist rotation residual under \(0.04^\circ\) and zero joint deviation over 200 \(\SO(3)\) rotations (\Cref{tab:equivariance_binned}).
\item \textbf{Curated-Dataset Validation:} We train \equidexflow{} on 8{,}100 force-closure grasps across 81 objects and evaluate on 811 held-out test grasps, recording the best composite quality score, lowest wrench residual, and zero friction violations across all ablation variants.
\end{enumerate}

\subsection{Related Work}\label{sec:related}
Dexterous grasp synthesis has progressed from analytical force-closure search~\cite{liu2021synthesizing,li2023frogger} and large validated corpora~\cite{wang2023dexgraspnet,zhang_dexgraspnet_2024} toward learned generators. Contact-centric methods replace direct pose prediction with intermediate representations that transfer across embodiments~\cite{li2023gendexgrasp,zhao2024graingrasp}. However, they do not model the coupled contact-force distribution. Pose-centric generative work spans VAEs~\cite{mousavian2019sixdofgraspnet} and SE(3) diffusion~\cite{urain2023se3diffusionfields} for parallel-jaw grippers, equivariant 6-DoF grip synthesis~\cite{weng2024capgrasp}, SE(3)-equivariant continuous flows~\cite{lim2025equigraspflow}, and dexterous-hand families using diffusion~\cite{weng2024dexdiffuser}, flow matching~\cite{feng2025ffhflow}, or conditional flows for multimodal joints~\cite{xu2023unidexgrasp}. Equivariance itself has been studied for planar~\cite{zhu2022sample} and graph-based~\cite{huang2023edge} grasping, and for Euclidean~\cite{kohler2020equivariant} and Riemannian~\cite{katsman2021equivariant,chen2023equidiff,rozenberg2023semiequivariant} flows. Yet none of these studies treats contact forces as a first-order generative variable. A separate physics-injecting thread brings forces closer to grasp generation through inference-time energy guidance~\cite{zhao2026effgrasp}, force-map reranking~\cite{semenyakina2026graspsense}, and adversarial synthesis under contact and belief uncertainty~\cite{chen_adversarial_2025,enweremRiskConstrainedBeliefSpaceOptimization2026a,enweremVariationalNeuralParameterizations2026a}. Even there, forces enter downstream rather than as variables learned jointly with the pose and contacts. We unify these threads with an equivariant generator that learns hand configuration, contacts, and forces jointly. To set up the equivariance problem, we first review Lie-group flow matching and Vector Neurons in the next section. For a compact but thorough Lie theory refresher, we recommend~\cite{sola_micro_2021}.

\subsection{Mathematical Preliminaries}\label{sec:prelim}
We recall the three building blocks our generator rests on: the geometry of the special Euclidean group, the formulation of flow matching when the carrier manifold is a Lie group, and the Vector-Neuron layers that carry \(\SO(3)\) structure through the network.

\textbf{The Special Euclidean Group \(\boldsymbol{\SE(3)}\):}
The special Euclidean group \(\SE(3) = \SO(3) \ltimes \R^3\) is the group of rigid-body transformations in three dimensions. Each element \(T = (R, x)\), with \(R \in \SO(3)\) and \(x \in \R^3\), acts on a point \(p \in \R^3\) via \(T \cdot p = Rp + x\)~\cite{murrayMathematicalIntroductionRobotic2017a}. Composition is \(T_1 T_2 = (R_1 R_2,\; R_1 x_2 + x_1)\). The Lie algebra \(\mathfrak{se}(3) \cong \R^6\) consists of twist vectors \(\xi = (\omega, v)\), where \(\omega \in \R^3\) is an angular velocity and \(v \in \R^3\) is a linear velocity. The exponential map \(\exp\colon \mathfrak{se}(3) \to \SE(3)\) and logarithmic map \(\log\colon \SE(3) \to \mathfrak{se}(3)\) provide a local diffeomorphism between the group and its tangent space at the identity.

\textbf{Flow Matching on Lie Groups:}\label{ssec:flowmatchlie}
Flow matching~\cite{lipman2023flow} learns a time-dependent velocity field \(v_\theta(x, t)\) that transports a simple source distribution \(p_0\) to a target distribution \(p_1\) via an ODE, \(\dot{x}_t = v_\theta(x_t, t)\). In Euclidean space, the conditional flow between a source sample \(x_0\) and target \(x_1\) is the linear interpolant, \(x_t = (1 - t)x_0 + t x_1\), with constant velocity \(u_t = x_1 - x_0\). On a Lie group \(G\), the geodesic \(g_t = g_0 \exp(t \cdot \log(g_0^{-1} g_1))\) replaces the linear interpolant, and the velocity lives in the Lie algebra \(\mathfrak{g}\)~\cite{chen2023riemannian}. The training objective remains an \(L^2\) regression, \(\mathcal{L}_{\mathrm{flow}} = \|v_\theta(g_t, t) - u_t\|^2\), where \(u_t \in \mathfrak{g}\) is the analytical geodesic tangent. At inference, we integrate with a Lie-group ODE solver such as the Munthe-Kaas Runge-Kutta (RK) scheme~\cite{munthekaas1998rungekutta}, which performs classical RK steps in the Lie algebra and maps back to the group through the exponential map, preserving the group structure at every step.

\textbf{\texorpdfstring{Vector Neurons and \(\boldsymbol{\SO(3)}\) Equivariance}{Vector Neurons and SO(3) Equivariance}:}
Vector Neurons~\cite{deng2021vn} replace each scalar feature channel with a 3-vector, so a feature tensor has shape \((C, 3)\) rather than \((C,)\). A VN-Linear layer mixes channels via a shared weight matrix without touching the spatial axis, giving \(z' = Wz\) with \(W \in \R^{C' \times C}\) and \(z \in \R^{C \times 3}\). Because \(W\) acts only on the channel axis, rotating the input by \(R \in \SO(3)\) commutes with the linear map: \(W(zR^\top) = (Wz)R^\top\). VN layers are therefore \(\SO(3)\)-equivariant by construction, and nonlinearities applied per-channel through vector norms or learned direction projections preserve the property. Stacking VN layers into a DGCNN backbone~\cite{deng2021vn} yields an encoder whose output \(z_O\) transforms as \(z_O \mapsto z_O R^\top\) under input rotation, providing the equivariant representation our generator builds on.

\section{Problem Formulation}\label{sec:problem}
We first set notation for kinematic and structured grasps, then state the learning problem the rest of the paper attacks.
We represent a hand-centered grasp as \(G=(T_w,q_h)\in\SE(3)\times\mathcal{Q}_h\), where \(\mathcal{Q}_h\subset\R^{D}\) is the space of joint configurations of a \(D\)-DoF hand, \(T_w=(R_w,x_w)\) is the wrist pose, and \(q_h\) is the hand configuration. Denote the hand geometry as \(\mathcal{H}(G):=T_w\,\mathcal{H}_{\mathrm{local}}(q_h)\), where \(\mathcal{H}_{\mathrm{local}}(q_h)\) is the set of hand link surfaces in the wrist frame, computed via forward kinematics from \(q_h\). Under a rigid transform \(A:=(R_A,x_A)\in\SE(3)\) acting on \(G\) as \(A\cdot G=(AT_w,q_h)\), forward kinematics is equivariant, i.e., 
\begin{equation}\label{eq:fk_equiv}
\mathcal{H}(A\cdot G)=A\cdot\mathcal{H}(G),
\end{equation}
so wrist-pose equivariance suffices for the induced hand mesh when \(q_h\) is treated as an invariant shape coordinate. From here on we generate grasps in the object frame. Let \(\smash{{}^{O}\mathcal{P}}\) be the object point cloud, and define the object-relative grasp \(\smash{{}^{O}G:=({}^{O}T_w,q_h)}\), whose wrist pose lives in an object-centered frame anchored at the object's center of mass. Conditioned on \(\smash{{}^{O}\mathcal{P}}\), our grasp generator is a joint likelihood over the relative grasp, fingertip contacts, and the contact geometry they induce,
\begin{equation}\label{eq:obj_frm_mod}
p_\theta({}^{O}G,{}^{O}C,{}^{O}\Lambda \mid {}^{O}\mathcal{P}),
\end{equation}
where \(\smash{{}^{O}C=\{({}^{O}p_i,{}^{O}n_i,B_i,\ell_i)\}_{i=1}^{M}}\) collects \(M\) fingertip contacts in the object frame (one per finger), \(\smash{B_i=[t_{i,1},t_{i,2},{}^{O}n_i]\in\SO(3)}\) is the local contact frame, \(\ell_i\) is the corresponding finger label, and \(\smash{{}^{O}\Lambda=\{\alpha_i\}_{i=1}^{M}}\) are the local force coordinates. Dropping the object-frame labels, the world-frame analog of \eqref{eq:fk_equiv} is the distributional condition
\begin{equation}
G \sim p_\theta(\cdot \mid \mathcal{P})
\quad\Longrightarrow\quad
A\cdot G \sim p_\theta(\cdot \mid A\mathcal{P}).
\label{eq:dist_equi}
\end{equation}
In the object frame, the distributional requirement of \eqref{eq:dist_equi} becomes invariance of the object-relative grasp distribution to the object's world pose encoded in \(\mathcal{P}\). We therefore retain the object-centered frame convention throughout the remainder of this paper.

\begin{definition}[SE(3)-Equivariant Grasp Generator]\label{def:eq_generator}
Let \(\G=(T_w,q_h,C,N,F)\) denote a structured grasp extending
the kinematic grasp \(G=(T_w,q_h)\) with the wrist pose \(T_w\in\SE(3)\), invariant hand joint vector \(q_h\), contact data \(C=\{p_i,B_i\}_{i=1}^{M}\), per-contact normals \(N=\{n_i\}_{i=1}^{M}\), and per-contact Cartesian forces, \(F=\{f_i\}_{i=1}^{M}\). The component-wise action of \(A=(R_A,x_A)\in\SE(3)\) is
\[
A\cdot \G=\bigl(AT_w,\;q_h,\;\{R_Ap_i+x_A,\;R_AB_i\},\;\{R_An_i\},\;\{R_Af_i\}\bigr).
\]
A conditional grasp generator \(p_\theta(\cdot\mid\mathcal{P})\) is \emph{SE(3)-equivariant} if it satisfies the distributional condition \eqref{eq:dist_equi}, i.e., \(\G\sim p_\theta(\cdot\mid\mathcal{P})\Rightarrow A\cdot \G\sim p_\theta(\cdot\mid A\mathcal{P})\) for every \(A\in\SE(3)\).
\end{definition}

\begin{problem}[Equivariant Force-Aware Grasp Generation]\label{prob:equidex}
Given an object point cloud \(\smash{{}^{O}\mathcal{P}}\) and a \(D\)-DoF, \(M\)-fingered hand, learn a parameterized distribution
\(p_\theta(\G \mid \smash{{}^{O}\mathcal{P}})\)
over the structured grasp \(\G=(T_w,q_h,C,N,F)\) such that
\begin{enumerate}[leftmargin=*, label=(\roman*)]
  \item \emph{SE(3) equivariance:} \(p_\theta\) satisfies \Cref{def:eq_generator};
  \item \emph{Physical feasibility:} every sample admits per-contact forces \(f_i\) inside the local Coulomb friction cone and a contact wrench that balances gravity at the object's center of mass.
\end{enumerate}
The model is trained from a simulated dataset of force-closure-certified grasps and evaluated at inference both on its distributional properties \emph{and} on the executability of the top-ranked sample on a physical robot.
\end{problem}

Spatial quantities (\(T_w,p_i,n_i,B_i,f_i\)) are covariant under rigid motion, while \(q_h\) and local force coordinates \(\alpha_i=B_i^\top f_i\) are intrinsic to the hand or contact frame and therefore invariant (see \cite{murrayMathematicalIntroductionRobotic2017a}, \S5 and \cite{sola_micro_2021}). This separation enables a single equivariant generator to produce physically consistent forces (\Cref{prop:norm_cons,prop:force_eq} and \Cref{thm:equiv}). The cone projection enforces friction compliance only relative to its construction frame, and a different normal at evaluation time can invalidate the bound (\Cref{prop:norm_cons}). Physical feasibility is not implied by \eqref{eq:fk_equiv}. We evaluate physical quality through the object-frame residual wrench \(\smash{{}^{O}r_w}\) \eqref{eq:obj_wren_res}, penalized by \(\mathcal{L}_w\) \eqref{eq:loss_wrench}, with friction in local coordinates as \(\mathcal{L}_\mu\) \eqref{eq:loss_friction}. These terms are coordinate-invariant but not invariant to physical rotations under fixed gravity, so the gravity vector \(\smash{{}^{O}g}\) enters as physical conditioning. For execution, the object pose \(\smash{{}^{W}T_O}\) induces a wrist target \(\smash{{}^{W}T_w = {}^{W}T_O\,{}^{O}T_w}\) and an arm-feasible set \(\mathcal{A}\) \eqref{eq:rb_feas_set}. This stage is not equivariant because the robot base fixes a world frame. The candidate score combines object-relative quality with robot feasibility,
\begin{equation}
J = \beta_1 Q_{\mathrm{geom}}({}^{O}G)
 + \beta_2 Q_{\mathrm{phys}}({}^{O}C,{}^{O}\Lambda)
 + \beta_3 Q_{\mathrm{task}}\!\bigl(\mathcal{A}({}^{O}G;{}^{W}T_O,\mathcal{E})\bigr)
 - \beta_4 Q_{\mathrm{risk}}({}^{O}G).
\label{eq:ranking}
\end{equation}
We train on a simulated dataset \(\smash{\mathcal{D}=\{({}^{O}\mathcal{P}^{(n)},{}^{O}G^{(n)},{}^{O}C^{(n)},{}^{O}\Lambda^{(n)})\}_{n=1}^{N}}\). The training objective \eqref{eq:equidexflowloss} couples flow matching, contact, force, and physical losses so that geometry, contacts, and forces obey the correct transformation laws. Ranking and robot feasibility handle execution constraints that are not task symmetries.

\section{Methodology}\label{sec:method}
We seek a generative model \(p_\theta(\G \mid \mathcal{P})\) over the structured grasp of \Cref{def:eq_generator}, with five outputs: wrist pose \(T_w\in\SE(3)\), finger joint angles \(q_h\in\mathcal{Q}_h\subset\R^{D}\), per-finger contact positions \(C\in\R^{M\times 3}\), inward surface normals \(N\in\R^{M\times 3}\), and contact forces \(F\in\R^{M\times 3}\). The point cloud \(\mathcal{P}\in\R^{3\times N}\) conditions all five, and the hand contributes \(M\) fingertips and \(D\) actuated joints (\Cref{sec:problem}). The two requirements of \Cref{prob:equidex} drive the design: \emph{SE(3) equivariance} (rotating the object rotates the generated grasp identically, without retraining) and \emph{physical feasibility} (every predicted contact force lies inside the Coulomb friction cone, and the aggregate balances gravity). Rather than enforce feasibility through auxiliary loss penalties alone, \equidexflow{} projects predicted forces onto the friction cone by construction, so the output is physically admissible under the projection contact frame (\Cref{prop:norm_cons}).

\subsection{Architecture Overview}\label{ssec:archov}
Given an object point cloud \(\mathcal{P}\in\R^{3\times N}\) and a dexterous hand with \(M\) fingertips, \equidexflow{} instantiates \eqref{eq:obj_frm_mod} as a single forward pass with five components (\Cref{fig:architecture}): (1)~a Vector-Neuron Dynamic Graph CNN (VN-DGCNN) point-cloud encoder that extracts SO(3)-equivariant object features \(z_O\in\R^{C\times 3}\), (2)~an SE(3) flow backbone that generates the wrist pose by integrating an equivariant velocity field from a source distribution to the target, (3)~a contact decoder that predicts per-finger contact positions \(\hat{C}\in\R^{M\times 3}\), (4)~a normal decoder that predicts per-finger inward surface normals \(\hat{N}\in\R^{M\times 3}\), and (5)~a force decoder that predicts contact forces
\(\hat{F}\in\R^{M\times 3}\) projected into the Coulomb friction cone by construction, together with a hand-joint flow decoder that samples finger joint angles \(\hat{q}_h\) from a learned conditional distribution. The hand embodiment enters only through \(M\) and the joint-angle dimensionality. The architecture, losses, and equivariance guarantees are embodiment-agnostic.
\begin{figure}[t]
\centering
\includegraphics[width=0.8\columnwidth]{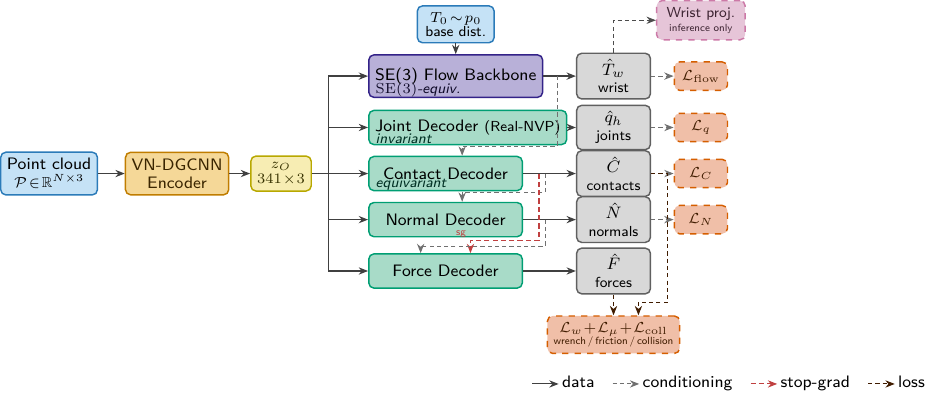}
\caption{\textbf{\equidexflow{} Architecture.} A VN-DGCNN encoder produces equivariant features \(z_O\) that drive five heads: an SE(3) wrist-pose flow and joint, contact, normal, and force decoders. The normal decoder replaces centroid estimates that shrink the wrench-balance feasible set, and a cone-projection layer maps each force into the friction cone by construction. Wrench, friction, and collision losses supervise the contact and force outputs.}
\label{fig:architecture}
\end{figure}
The joint distribution in \eqref{eq:obj_frm_mod} factorizes into two stochastic components and three deterministic decoder heads, all conditioned on the shared equivariant features \(z_O\). The generative process is
\begin{subequations}\label{eq:factorization}
\begin{align}
  T_0 \sim p_0,\quad T_w &= \Phi_\theta^{1\leftarrow 0}(T_0;\, z_O), \label{eq:fac_flow} \\
  q_h &\sim p_\theta^q(\cdot \mid z_O, T_w), \label{eq:fac_joints} \\
  C &= g_\theta^C(z_O,\, T_w), \label{eq:fac_contact} \\
  N &= \eta_\theta(z_O,\, C), \label{eq:fac_normal} \\
  \Lambda &= \Pi_\mu\!\bigl(\tilde{f}_\theta(z_O,\, C),\; N\bigr), \label{eq:fac_force}
\end{align}
\end{subequations}
where \(\Phi_\theta^{1\leftarrow 0}\) is the time-one map of the ODE \(\dot{T}_t = v_\theta(T_t, z_O, t)\) on \(\SE(3)\), \(p_0\) is a source distribution on \(\SE(3)\), \(p_\theta^q\) is the conditional joint-angle distribution modeled by a normalizing flow (\S\ref{sssec:hand_joint_decoder}), \(g_\theta^C\) and \(\eta_\theta\) are the contact-position and surface-normal decoders, \(\tilde{f}_\theta\) is the raw force decoder, and \(\Pi_\mu\) projects into the Coulomb friction cone. The induced joint density is
\begin{multline}\label{eq:joint_density}
p_\theta(T_w, q_h, C, N, \Lambda \mid \mathcal{P})
= p_\theta(T_w \mid \mathcal{P})\,
  p_\theta^q(q_h \mid z_O, T_w)\,
  \delta\!\bigl(C - g_\theta^C(z_O, T_w)\bigr)\\
  {}\cdot\delta\!\bigl(N - \eta_\theta(z_O, C)\bigr)\,
  \delta\!\bigl(\Lambda - \Pi_\mu(\tilde{f}_\theta(z_O, C),\, N)\bigr),
\end{multline}
where \(p_\theta(T_w \mid \mathcal{P}) = (\Phi_\theta^{1\leftarrow 0}(\cdot\,; z_O))_\# p_0\) is the push-forward through the learned flow map~\cite{lipman2023flow}. Stochasticity enters through the wrist flow and the joint-angle flow, while the remaining decoders are deterministic. The sequential contact-to-normal-to-force chain couples wrench balance with contact placement through the training loss. We expand each component below, from the equivariant point-cloud encoder through the flow-matching backbone to the contact, normal, and force decoders (details in \Cref{apx:arch_details}).

\textbf{Equivariant Point-Cloud Encoder:}\label{sssec:encoder}
A VN-DGCNN~\cite{deng2021vn} encodes \(\mathcal{P}\) into a global feature \(z_O\in\R^{341\times3}\) via seven \texttt{VNLinearLeakyReLU} layers with \(k\)-NN graph construction (widths and \(k\) in \Cref{tab:hyperparams}). Rotating the input by \(R\in\SO(3)\) yields \(z_O R^\top\), satisfying \eqref{eq:dist_equi}.

\textbf{SE(3) Flow Matching Backbone:}\label{sssec:flow_backbone}
We formulate wrist pose generation as flow matching~\cite{lipman2023flow,chen2018neural} on \(\SE(3)\)~\cite{chen2023riemannian,mathieu2020riemannian}, connecting a uniform source \(T_0\sim p_0\) to the data target via the geodesic path \(R_t = R_0\exp(t\log(R_0^{-1}R_1))\), \(x_t = x_0+t(x_1-x_0)\). We train a VN-MLP vector field \(v_\theta(T_t,z_O,t)\) with \(\mathcal{L}_{\mathrm{flow}}=\|v_\theta-u_t\|^2\) and integrate it at inference with a Munthe-Kaas RK4 solver~\cite{munthekaas1998rungekutta} under classifier-free guidance~\cite{ho2022cfg}.

\textbf{Contact Decoder:}\label{sssec:contact_decoder}
An equivariant path predicts four per-finger contact offsets anchored to the predicted wrist translation \(\hat{x}_w\), which a differentiable soft-nearest-neighbor projection snaps onto the input surface, so contacts lie on the object by construction.

\textbf{Normal Decoder:}\label{sssec:normal_decoder}
A dedicated equivariant decoder predicts per-finger inward surface normals \(\hat{n}_i\), decoupling normal estimation from force optimization so the wrench-balancing feasible set is not locked to a centroid heuristic.

\textbf{Force Decoder:}\label{sssec:force_decoder}
The force decoder predicts per-finger contact forces directly in the object frame using \(\SO(3)\)-equivariant layers with per-finger contact conditioning. For each finger~\(i\), it concatenates the global features \(z_O\in\R^{341\times3}\) with the predicted contact \(\hat{c}_i\) as an equivariant vector channel (\(z_i\in\R^{342\times3}\)) and passes \(z_i\) through a shared VN stack (widths in \Cref{tab:hyperparams}) to produce a raw force vector \(\tilde{f}_i\in\R^3\). The contact channel provides per-finger spatial grounding. Without it the decoder collapses to a uniform radial-push solution. A cone-projection layer then maps each raw force \(\tilde{f}_i\) into the Coulomb friction cone by construction,
\begin{equation}
  f_{n,i} = \mathrm{softplus}(\tilde{f}_i \cdot \hat{n}_i), \quad
  f_{t,i} = \tilde{f}_i - (\tilde{f}_i \cdot \hat{n}_i)\,\hat{n}_i, \quad
  \hat{f}_i = f_{n,i}\,\hat{n}_i + \min\!\bigl(1,\, \mu f_{n,i} / \|f_{t,i}\|\bigr)\, f_{t,i},
\end{equation}
where \(\hat{n}_i\) is the learned inward surface normal from the normal decoder (\S\ref{sssec:normal_decoder}). \Cref{fig:contact_frame} illustrates the geometry. The softplus ensures compressive normal force (\(f_{n,i} \geq 0\)) and the tangential scaling enforces \(\|f_{t,i}\| \leq \mu f_{n,i}\), so friction-cone compliance is an architectural guarantee rather than a loss-driven objective. Decomposing the force in the local contact frame is also what makes the reconstructed Cartesian force end-to-end equivariant (\Cref{prop:force_eq}).

\begin{figure}[htb]
  \centering
 \includegraphics[width=0.85\columnwidth]{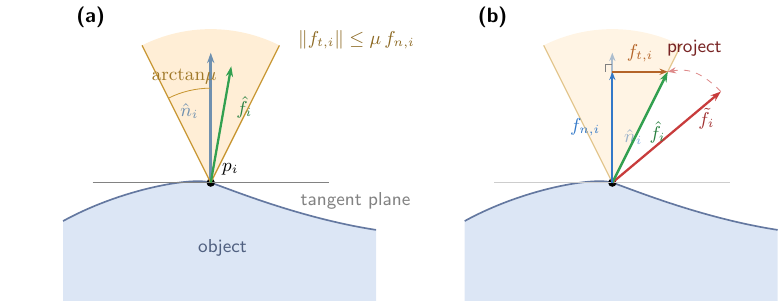}
  \caption{\textbf{Contact Geometry and Friction Cone Projection.} (a)~Inward surface normal \(\hat{n}_i\) and tangent plane define the Coulomb friction cone \(\smash{\|f_{t,i}\| \le \mu\, f_{n,i}}\). (b)~The equivariant force decoder predicts a raw force \(\tilde{f}_i\) (possibly outside the cone). It is decomposed into normal and tangential components and rescaled so the projected output \(\hat{f}_i\) lies inside the cone by construction.}
  \label{fig:contact_frame}
\end{figure}

\textbf{Hand Joint Decoder:}\label{sssec:hand_joint_decoder} Because multiple valid finger configurations exist for the same wrist pose, a deterministic MSE decoder collapses modes to an infeasible mean~\cite{xu2023unidexgrasp}. We instead model \(p_\theta^q(q_h\mid z_O,T_w)\) with a conditional Real-NVP flow~\cite{dinh2017density} using affine coupling layers conditioned on \(\SO(3)\)-invariant features \(\|z_O\|_2\) and the wrist pose.

\textbf{Training Losses:}\label{ssec:trloss} Our training loss is a weighted sum of eight terms
\begin{equation}\label{eq:equidexflowloss}
\mathcal{L} =
  \lambda_{\mathrm{flow}}\,\mathcal{L}_{\mathrm{flow}}
+ \lambda_{q}\,\mathcal{L}_{q}
+ \lambda_{C}\,\mathcal{L}_{C}
+ \lambda_{N}\,\mathcal{L}_{N}
+ \lambda_{F}\,\mathcal{L}_{F}
+ \lambda_{w}\,\mathcal{L}_{w}
+ \lambda_{\mu}\,\mathcal{L}_{\mu}
+ \lambda_{\mathrm{coll}}\,\mathcal{L}_{\mathrm{coll}},
\end{equation}
where the individual components are
\begin{subequations}\label{eq:loss_components}
\small
\setlength{\jot}{1pt}
\begin{alignat}{2}
\mathcal{L}_{\mathrm{flow}} &= \|v_\theta-u_t\|^2
  &&\;\textcolor{Blue}{(\text{flow})} \label{eq:loss_flow} \\
\mathcal{L}_{q} &= -\tfrac{1}{D}\log p_\theta^q(q^*\mid z_O,T_w)
  &&\;\textcolor{Blue}{(\text{joint NLL})} \label{eq:loss_joint} \\
\mathcal{L}_{C} &= \|\hat{C}-C^*\|^2
  &&\;\textcolor{Blue}{(\text{contact})} \label{eq:loss_contact} \\
\mathcal{L}_{N} &= 1-\tfrac{1}{M}\textstyle\sum_i \hat{n}_i^{\top}n_i^{*}
  &&\;\textcolor{Blue}{(\text{normal})} \label{eq:loss_normal} \\
\mathcal{L}_{F} &= \textstyle\sum_i\|\hat{\alpha}_i-\alpha_i^*\|^2
  &&\;\textcolor{Blue}{(\text{force})} \label{eq:loss_force} \\
\mathcal{L}_{w} &= ({}^{O}\hat{r}_w)^\top W_w\,{}^{O}\hat{r}_w
  &&\;\textcolor{Blue}{(\text{wrench})} \label{eq:loss_wrench} \\
\mathcal{L}_{\mu} &= \textstyle\sum_i\bigl[\mathrm{ReLU}(\|\hat{f}_{t,i}\|-\mu\hat{f}_{n,i})+\mathrm{ReLU}(-\hat{f}_{n,i})\bigr]
  &&\;\textcolor{Blue}{(\text{friction})} \label{eq:loss_friction} \\
\mathcal{L}_{\mathrm{coll}} &= \mathcal{L}_{\mathrm{fo}}+\mathcal{L}_{\mathrm{self}}
  &&\;\textcolor{Blue}{(\text{collision})} \label{eq:loss_collision}
\end{alignat}
\end{subequations}
where \(W_w=\mathrm{diag}(I_3,\ell_O^{-2}I_3)\) (with \(\ell_O\) an object length scale that puts force and torque units on the same footing), \(\mu=0.5\), and the object-frame residual wrench is
\begin{equation}
{}^{O}\hat{r}_w=\graspmap(\hat{C})\hat{F}+{}^{O}w_{\mathrm{ext}}=\sum_{i=1}^{M}\!
\begin{bmatrix}{}^{O}\hat{f}_i\\({}^{O}\hat{c}_i-{}^{O}p_{\mathrm{com}})\times{}^{O}\hat{f}_i\end{bmatrix}
+{}^{O}w_{\mathrm{ext}},
\label{eq:obj_wren_res}
\end{equation}
with \(\smash{{}^{O}p_{\mathrm{com}}}\) the object center of mass. In practice \(\mathcal{L}_\mu\) is near-zero because the cone projection is architectural, and we evaluate the contact, normal, and force losses only over fingers with valid ground-truth contacts. The wrench and friction terms stay coordinate-invariant in a common frame, but they vary under physical object rotations because \(\smash{{}^{O}g=({}^{W}R_O)^\top{}^{W}g}\) makes the external wrench orientation-dependent. Gravity therefore enters as physical conditioning, not as a symmetry to remove, and the per-loss weights appear in \Cref{tab:hyperparams}.

\begin{table}[htb]
\caption{\scshape Weighted Loss-Component Magnitudes (Allegro \textsc{Full} Model, Validation Mean over the Final 50 Logged Evaluations of the 16.2K-Step Run).}
\label{tab:loss_components}
\centering
\small
\begin{tabular}{@{}lccclccc@{}}
\toprule
Term & \(\lambda_i\) & Raw \(\mathcal{L}_i\) & Weighted \(\lambda_i \mathcal{L}_i\) & Term & \(\lambda_i\) & Raw \(\mathcal{L}_i\) & Weighted \(\lambda_i \mathcal{L}_i\) \\
\midrule
Flow          & 1.0   & 0.916  & 0.916 & Force         & 1.0   & 4.499 & 4.499 \\
Hand-q (NLL)  & 1.0   & 0.185  & 0.185 & Wrench        & 0.1   & 1.181 & 0.118 \\
Contact       & 100.0 & 0.0028 & 0.280 & Friction      & 0.1   & 0.000 & 0.000 \\
Normal        & 1.0   & 1.068  & 1.068 & Collision     & 100.0 & 0.0043 & 0.430 \\
\bottomrule
\end{tabular}
\end{table}

Before establishing end-to-end equivariance, we isolate the force pathway and its implication for the overall learning architecture.

\begin{proposition}[Force Equivariance]\label{prop:force_eq}
Let \(\alpha_i\in\R^3\) be local force coefficients predicted in the contact frame \(B_i\) and let \(f_i=B_i\alpha_i\) be the reconstructed Cartesian force. If \(\alpha_i\) is computed as an invariant function of \(\mathcal{P}\) (i.e.\ \(\alpha_i(A\mathcal{P})=\alpha_i(\mathcal{P})\) for every \(A\in\SE(3)\)) and \(B_i\) transforms equivariantly as \(B_i'=R_AB_i\), then \(f_i\) is equivariant: \(f_i(A\mathcal{P})=R_Af_i(\mathcal{P})\).
\end{proposition}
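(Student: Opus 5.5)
The proof is a one-line substitution, so the plan is mostly about making the dependence on \(\mathcal{P}\) explicit and checking that the hypotheses are used in the right order. I would write the reconstructed force as the composite map
\[
f_i(\mathcal{P}) = B_i(\mathcal{P})\,\alpha_i(\mathcal{P}),
\]
with \(B_i(\mathcal{P})\in\SO(3)\) and \(\alpha_i(\mathcal{P})\in\R^3\). I would then evaluate it at the transformed input \(A\mathcal{P}\) for an arbitrary \(A=(R_A,x_A)\in\SE(3)\).

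\textbf{Key steps.}
\begin{enumerate}[leftmargin=*]
\item Apply the two hypotheses separately.
\begin{itemize}
\item The equivariance of the contact frame gives \(B_i(A\mathcal{P})=R_AB_i(\mathcal{P})\).
\item The invariance of the local coefficients gives \(\alpha_i(A\mathcal{P})=\alpha_i(\mathcal{P})\).
\end{itemize}
\item Substitute both into the composite map and use associativity of matrix multiplication:
\[
f_i(A\mathcal{P})=\bigl(R_AB_i(\mathcal{P})\bigr)\alpha_i(\mathcal{P})=R_A\bigl(B_i(\mathcal{P})\alpha_i(\mathcal{P})\bigr)=R_Af_i(\mathcal{P}).
\]
This is exactly the claimed transformation law, and it matches the component-wise action on \(F\) in \Cref{def:eq_generator}.
\end{enumerate}

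\textbf{Translations.} I would remark explicitly that \(x_A\) never appears in the result. Forces are free vectors, so they transform only through the rotational part \(R_A\). This is consistent with \(B_i\) transforming as \(R_AB_i\) rather than picking up a translational term.

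\textbf{Consistency remark.} Since \(B_i\in\SO(3)\), the inverse relation \(\alpha_i=B_i^\top f_i\) also holds. It is then worth noting the converse check: \((R_AB_i)^\top(R_Af_i)=B_i^\top f_i\). So the invariance of \(\alpha_i\) is exactly compatible with the equivariance of \(f_i\), which justifies the separation into covariant and invariant quantities described after \Cref{prob:equidex}.

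\textbf{Main obstacle.} The algebra itself is trivial. The only real subtlety is that the hypotheses must hold for the actual maps the network computes. In particular, \(B_i\) must be built equivariantly from equivariant quantities. I would argue this from the construction \(B_i=[t_{i,1},t_{i,2},n_i]\) with \(n_i\mapsto R_An_i\). If the tangents are produced by an equivariant Gram--Schmidt step from another equivariant vector (for example the contact offset or a VN output), then \(t_{i,k}\mapsto R_At_{i,k}\), and the claim follows. However, the proposition takes this as an assumption, so the proof need not establish it.
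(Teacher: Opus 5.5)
Your proof is correct and follows the paper's argument: both substitute \(B_i(A\mathcal{P})=R_AB_i(\mathcal{P})\) and \(\alpha_i(A\mathcal{P})=\alpha_i(\mathcal{P})\) into \(f_i=B_i\alpha_i\) and regroup by associativity to obtain \(R_Af_i(\mathcal{P})\). Your additional remarks are correct but not needed for the proof: translations drop out because forces are free vectors, and the check \((R_AB_i)^\top(R_Af_i)=B_i^\top f_i\) confirms that \(\alpha_i\) is invariant.
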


\begin{proof}
By the composition of invariant and equivariant maps~\cite{deng2021vn},
\(f_i(A\mathcal{P})=B_i'\,\alpha_i(A\mathcal{P})
=(R_AB_i)\,\alpha_i(\mathcal{P})
=R_A\bigl(B_i\alpha_i(\mathcal{P})\bigr)
=R_Af_i(\mathcal{P})\).
\end{proof}

\begin{remark}[Force Equivariance Implication]
A network that predicts Cartesian forces directly can encode spurious world-frame correlations (e.g., a learned bias toward the world \(z\)-axis). Predicting in local coordinates and reconstructing the Cartesian force geometrically gives equivariance for free and is essential for the end-to-end guarantee in \Cref{thm:equiv}. Gradients flow from the force decoder through the predicted contacts to the contact decoder, coupling the wrench-balance objective with contact placement.
\end{remark}
\subsection{Wrist Refinement}\label{ssec:wristproj}
\begin{wrapfigure}[12]{r}{0.42\columnwidth}
\centering\vspace{-20pt}
\includegraphics[width=0.40\columnwidth]{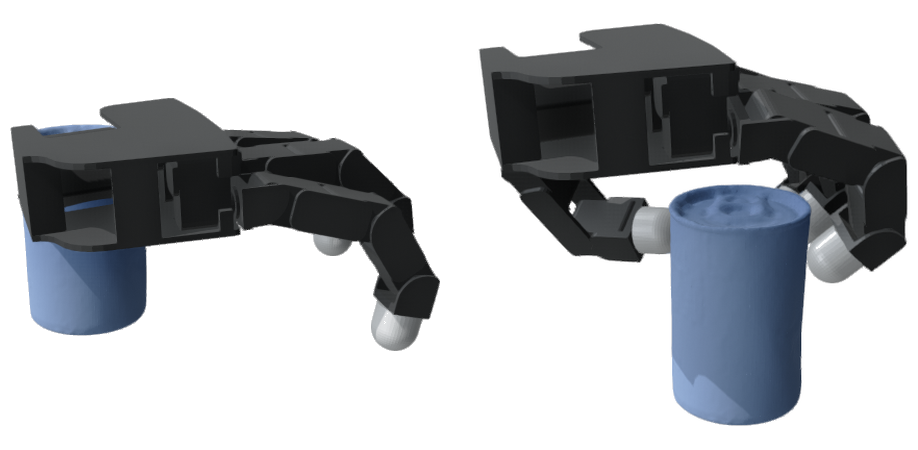}
\caption{\textbf{Wrist Refinement.} Pre-contact IK (\emph{left}): the hand at the raw decoded wrist pose and joint configuration \(\hat{q}_h\) hovers above the YCB tomato soup can, and the predicted contacts \(\hat{C}\) are not realized by the kinematic
fingertips \(g_i(T_w, q_h)\). Contact IK (\emph{right}): minimizing \eqref{eq:tto_obj} seats each fingertip on the surface, driving \(g_i(T_w, q_h) \to \hat{C}_i\).}
\label{fig:tto_prepost}
\vspace{-6pt}
\end{wrapfigure}
The flow-decoded wrist typically places the fingertips a few centimeters off the object surface. We close it with a \emph{test-time optimization} (TTO) step that jointly refines the full \(6\)-DoF wrist pose and the \(D\)-DoF joint vector so the kinematic fingertips reach the predicted contacts (\Cref{fig:tto_prepost}). Let \(g_i(T_w,q_h)\) be the \(i\)-th fingertip position under the differentiable forward kinematics of the hand and let \(\hat{C}_i\) be the decoded target contacts. Starting from the decoded pose \((\hat{T}_w,\hat{q}_h)\), we minimize the loss function
\begin{align}\label{eq:tto_obj}
\mathcal{L}_{\mathrm{TTO}}=&\sum_{i}\bigl\|g_i(T_w,q_h)-\hat{C}_i\bigr\|^2
+w_{\mathrm{tr}}\,\rho\bigl((T_w,q_h),(\hat{T}_w,\hat{q}_h)\bigr)\nonumber\\
&+w_{\mathrm{pen}}\sum_i\mathrm{ReLU}\bigl(-\Phi_M(g_i)\bigr)^2
+\mathcal{B}(q_h),
\end{align}
where \(\Phi_M:\R^3\to\R\) is the signed distance to the object mesh (positive outside), \(\rho\) is a trust-region penalty on deviation from the decoded pose, the third term penalizes fingertips inside the mesh, and \(\mathcal{B}\) is a soft joint-limit barrier. We optimize \eqref{eq:tto_obj} by gradient descent (\(200\) steps, learning rate \(0.02\), \(w_{\mathrm{tr}}=0.05\), \(w_{\mathrm{pen}}=50\), all repeated in \Cref{tab:hyperparams}), parameterizing the wrist update in the tangent space of \(\SE(3)\) so iterates stay on the manifold. An optional finger-finger self-collision term is disabled (\(w_{\mathrm{self}}=0\)) because it did not improve seated contact. Every term in \eqref{eq:tto_obj} is frame-invariant, so the refinement map \(\phi_{\mathrm{ref}}\) commutes with any \(A\in\SE(3)\) (\Cref{apx:eqv_prf}).

\subsection{Equivariance Guarantee}\label{ssec:eq_guarantee}
The cone projection is single-frame: it enforces friction compliance in the same contact frame the local coefficients live in. A different normal at evaluation time can therefore invalidate the cone bound, as the following proposition makes precise.

\begin{proposition}[Normal Consistency]\label{prop:norm_cons}
Let \(\hat{n}_i\) be the unit normal used to construct the contact frame \(B_i=[t_{i,1},t_{i,2},\hat{n}_i]\) in which the force decoder predicts local coefficients \(\alpha_i=(f_{t1},f_{t2},f_n)^\top\) and the cone projection enforces \(\sqrt{f_{t1}^2+f_{t2}^2}\leq\mu f_n\), \(f_n\geq 0\). Let \(\tilde{n}_i\) be a different unit normal used to evaluate friction compliance at test time, with frame \(\tilde{B}_i=[s_{i,1},s_{i,2},\tilde{n}_i]\). Decompose the reconstructed force \(f_i=B_i\alpha_i\) in the evaluation frame as \(\tilde{\alpha}_i=\tilde{B}_i^\top f_i\). Then the evaluation-frame normal component is
\begin{equation}
\tilde{f}_{n,i}=\hat{n}_i^\top\tilde{n}_i\,f_{n,i}+(\text{tangential cross-terms}),
\label{eq:normal_cross}
\end{equation}
and the friction-cone condition in the evaluation frame is violated whenever \(\|\tilde{f}_{t,i}\|>\mu\,\tilde{f}_{n,i}\), which can occur even when \(\alpha_i\) satisfies the generation-frame cone exactly, provided \(\hat{n}_i\neq\tilde{n}_i\).
\end{proposition}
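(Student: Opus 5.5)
The plan is to compute the evaluation-frame coordinates directly and then exhibit a configuration in which the generation-frame cone holds but the evaluation-frame cone fails.

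\emph{Step 1: the normal component.} Write \(f_i=B_i\alpha_i=f_{t1}t_{i,1}+f_{t2}t_{i,2}+f_{n,i}\hat n_i\). Since \(\tilde B_i\in\SO(3)\), the evaluation-frame normal component is \(\tilde f_{n,i}=\tilde n_i^\top f_i\). Expanding by linearity gives
\[
\tilde f_{n,i}=(\hat n_i^\top\tilde n_i)f_{n,i}+(t_{i,1}^\top\tilde n_i)f_{t1}+(t_{i,2}^\top\tilde n_i)f_{t2},
\]
which is \eqref{eq:normal_cross}, with the last two terms being the tangential cross-terms. The tangential part is \(\|\tilde f_{t,i}\|^2=\|f_i\|^2-\tilde f_{n,i}^2\), because \(\tilde B_i^\top\) preserves norms. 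Here \(\|f_i\|^2=f_{t1}^2+f_{t2}^2+f_{n,i}^2\) is frame-independent.

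\emph{Step 2: a violating configuration.} Let \(\theta\) be the angle between \(\hat n_i\) and \(\tilde n_i\), with \(\theta\neq 0\). Place the force on the boundary of the generation cone, tilted in the rotation plane away from \(\tilde n_i\): take \(\|f_t\|=\mu f_n\), so \(f_i\) makes angle \(\arctan\mu\) with \(\hat n_i\). The angle between \(f_i\) and \(\tilde n_i\) is then \(\arctan\mu+\theta\). The evaluation-frame condition \(\|\tilde f_t\|\le\mu\tilde f_n\) says exactly that this angle is at most \(\arctan\mu\) (with \(\tilde f_n\ge0\)). Since \(\theta>0\), the condition fails.

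Equivalently, the cone \(\mathcal K_\mu(n)=\{f:\|f-(n^\top f)n\|\le\mu\,n^\top f\}\) is determined by its axis \(n\), and \(\mathcal K_\mu(\hat n)\neq\mathcal K_\mu(\tilde n)\) whenever \(\hat n\neq\tilde n\). Any \(f\in\mathcal K_\mu(\hat n)\setminus\mathcal K_\mu(\tilde n)\) serves as a witness. Such an \(f\) exists because two distinct round cones with the same half-angle cannot contain one another; containment would force their axes to coincide.

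\emph{Step 3: degenerate cases.} If \(\tilde n_i=-\hat n_i\), even interior forces give \(\tilde f_n<0\), so the violation is immediate. The residual freedom in choosing the tangent vectors \(s_{i,1},s_{i,2}\) does not affect either \(\tilde f_n\) or \(\|\tilde f_t\|\), so the choice of frame is harmless.

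The main obstacle is only presentational. The statement says violation ``can occur'', so existence of one witness suffices. The work is to make the boundary-tilt construction explicit, for instance by writing \(\tilde n_i=\cos\theta\,\hat n_i+\sin\theta\,t_{i,1}\) and \(\alpha_i=(-\mu,0,1)\). With these choices, \(\tilde f_n=\cos\theta-\mu\sin\theta\), and the violation reduces to a one-line trigonometric inequality in \(\theta\).
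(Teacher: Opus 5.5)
Your proof is correct. Step 1 is exactly the paper's argument: expand \(f_i=B_i\alpha_i\) in the generation frame and project onto \(\tilde n_i\).

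The two routes diverge on the existence claim. The paper argues only qualitatively: the cross-terms are generically nonzero, the \(f_{n,i}\) coefficient is scaled by \(\cos\angle(\hat n_i,\tilde n_i)<1\), and the resulting ``leakage breaks the cone bound.'' Taken literally this does not prove that a violation occurs. A force well inside the generation cone, e.g.\ \(\alpha_i=(0,0,1)\) with tilt \(\theta\le\arctan\mu\), stays inside the evaluation cone. The cross-terms can also increase \(\tilde f_{n,i}\) rather than decrease it.

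Your boundary-tilt witness makes both effects act in the same direction. With \(\alpha_i=(-\mu,0,1)\) and \(\tilde n_i=\cos\theta\,\hat n_i+\sin\theta\,t_{i,1}\), you get \(\tilde f_{n,i}=\cos\theta-\mu\sin\theta\) and \(\|\tilde f_{t,i}\|^2=(\sin\theta+\mu\cos\theta)^2\). Hence \(\|\tilde f_{t,i}\|-\mu\tilde f_{n,i}=(1+\mu^2)\sin\theta>0\) whenever \(\sin\theta+\mu\cos\theta\ge 0\), and \(\tilde f_{n,i}<0\) otherwise. This gives a strict violation for every \(\theta\in(0,\pi]\), and it covers your antiparallel case in Step 3 as well. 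The paper's version conveys the mechanism; yours actually establishes the ``can occur'' claim for every pair \(\hat n_i\neq\tilde n_i\).

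Three small repairs:
\begin{enumerate}
\item The sentence ``the angle between \(f_i\) and \(\tilde n_i\) is \(\arctan\mu+\theta\)'' holds only when \(\arctan\mu+\theta\le\pi\). Otherwise the angle is \(2\pi-\arctan\mu-\theta\), which still exceeds \(\arctan\mu\). It is cleaner to lead with the explicit computation.
\item Since \(t_{i,1}\) is fixed by \(B_i\), the alignment of \(\tilde n_i\) with \(t_{i,1}\) is not automatic. In general, take \((f_{t1},f_{t2})=-\mu\,(e^\top t_{i,1},\,e^\top t_{i,2})\), where \(e\) is the normalized tangential part of \(\tilde n_i\) (any unit tangent if \(\tilde n_i=-\hat n_i\)). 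The same formulas then follow.
\item The remark that two distinct congruent cones cannot contain one another is asserted without proof. You do not need it once the explicit witness is in place.
\end{enumerate}
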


\begin{proof}
Write \(f_i = B_i\alpha_i = f_{t1}\,t_{i,1}+f_{t2}\,t_{i,2}+f_n\,\hat{n}_i\). Project onto \(\tilde{n}_i\):
\[
\tilde{f}_{n,i} = f_i^\top\tilde{n}_i
= f_{t1}(t_{i,1}^\top\tilde{n}_i)+f_{t2}(t_{i,2}^\top\tilde{n}_i)+f_n(\hat{n}_i^\top\tilde{n}_i).
\]
When \(\hat{n}_i=\tilde{n}_i\), the tangent terms vanish (\(t_{i,k}\perp\hat{n}_i\)) and \(\tilde{f}_{n,i}=f_n\), preserving the cone. When \(\hat{n}_i\neq\tilde{n}_i\), the tangent cross-terms are generically nonzero, the normal projection shrinks by \(\cos\angle(\hat{n}_i,\tilde{n}_i)<1\), and tangential leakage into the evaluated normal direction breaks the cone bound.
\end{proof}

\begin{remark}[Normal Consistency Implication] An architectural cone projection enforces friction compliance only relative to the contact frame it was constructed with: if a different normal estimate (e.g., centroid-based vs.\ mesh-derived) is used downstream for scoring or simulation, the compliance guarantee does not transfer and the measured violation rate can approach 100\% even for a perfectly trained model. More generally, any equivariant and consistently applied frame convention suffices. What breaks the invariant is using one convention at generation time and a different one at evaluation time. Our architecture uses a single geometry-derived normal source throughout training, generation, and evaluation.
\end{remark}

\begin{theorem}[End-to-End SE(3) Architectural Equivariance]\label{thm:equiv}
Let \(\mathcal{P}\) be an object point cloud and let \(A=(R_A,x_A)\in\SE(3)\). The \equidexflow{} inference pipeline (VN-DGCNN encoder, SE(3) flow backbone, contact / normal / force heads with cone projection, and wrist refinement \(\phi_{\mathrm{ref}}\)) produces an output \(\G=(T_w,q_h,C,N,F)\) satisfying
\begin{equation}
\G\sim p_\theta(\cdot\mid\mathcal{P})\;\Longrightarrow\;A\cdot \G\sim p_\theta(\cdot\mid A\mathcal{P}),
\end{equation}
i.e., the pipeline is SE(3)-equivariant in the sense of \Cref{def:eq_generator}.
\end{theorem}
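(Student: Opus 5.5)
We prove the theorem by a pathwise coupling argument. Fix the randomness, namely the source sample \(T_0\sim p_0\) and the latent noise \(\epsilon\) of the Real-NVP joint flow, and view the pipeline as a deterministic map \(\Psi(\mathcal{P};T_0,\epsilon)\). We then show that \(\Psi(A\mathcal{P}; A T_0,\epsilon)=A\cdot\Psi(\mathcal{P};T_0,\epsilon)\). The distributional statement follows because \(p_0\) is uniform on \(\SE(3)\) (left-invariant in rotation, with the translational part handled by centering, see below), so \(T_0\mapsto AT_0\) preserves \(p_0\). The noise \(\epsilon\) is untouched. Pushing forward both sides gives \(A\cdot\G\sim p_\theta(\cdot\mid A\mathcal{P})\).

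\textbf{Chaining the equivariance stage by stage.} We follow the factorization \eqref{eq:factorization}.
\begin{enumerate}
\item \emph{Encoder.} The point cloud is expressed relative to its centroid (center of mass), so translations \(x_A\) cancel. By the VN property recalled in \Cref{sec:prelim}, \(z_O\mapsto z_OR_A^\top\).
\item \emph{Wrist flow.} The VN-MLP velocity field, fed the relative pose, is left-equivariant: \(v_\theta(AT,z_OR_A^\top,t)\) is the correspondingly transported twist. Hence the ODE solution from \(AT_0\) is \(AT_t\). The Munthe-Kaas RK4 steps act by right-multiplication \(T\exp(\cdot)\) with body-frame increments, which commutes with left-multiplication by \(A\). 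Classifier-free guidance is an affine combination of equivariant fields, so it is also equivariant. This gives \(T_w\mapsto AT_w\).
\item \emph{Joint angles.} The flow for \(q_h\) is conditioned on \(\|z_O\|_2\), which is invariant, and on the wrist pose. The pose must enter through invariant quantities, such as \(T_w\) expressed relative to \(z_O\) (equivalently the object frame). Then \(q_h\) is unchanged.
\item \emph{Contacts.} The VN offsets rotate with \(R_A\) and are anchored at \(x_w\mapsto R_Ax_w+x_A\). The soft-nearest-neighbor projection uses only distances and convex combinations of points of \(A\mathcal{P}\), so it commutes with rigid motions. Hence \(p_i\mapsto R_Ap_i+x_A\).
\item \emph{Normals.} These are produced by VN layers with centered contacts as vector channels, so \(n_i\mapsto R_An_i\). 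The frame \(B_i\), built from \(\hat n_i\) by an equivariant convention, satisfies \(B_i\mapsto R_AB_i\).
\item \emph{Forces.} The raw force is VN-equivariant. The cone projection depends only on the inner products \(\tilde f_i\cdot\hat n_i\) and norms, which are invariant, and on vector combinations of \(\tilde f_i\) and \(\hat n_i\). Therefore \(\hat f_i\mapsto R_A\hat f_i\). Equivalently, \(\alpha_i=B_i^\top f_i\) is invariant, and \Cref{prop:force_eq} applies.
\end{enumerate}

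\textbf{Refinement.} \(\mathcal{L}_{\mathrm{TTO}}\) is built from the following pieces:
\begin{itemize}
\item fingertip–contact distances, which are invariant by \eqref{eq:fk_equiv} since \(g_i(AT_w,q_h)=Ag_i(T_w,q_h)\);
\item a trust-region term on the relative pose \(\hat T_w^{-1}T_w\), which is left-invariant;
\item the SDF of the transformed mesh;
\item the joint barrier, which does not involve the pose.
\end{itemize}
Hence \(\mathcal{L}_{\mathrm{TTO}}(AT_w,q_h;A\cdot)=\mathcal{L}_{\mathrm{TTO}}(T_w,q_h;\cdot)\). Gradient descent with updates parameterized in the body-frame tangent space then produces iterates related by \(A\) at every step, by induction over the 200 steps. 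Composing all stages yields the pathwise identity above.

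\textbf{Main obstacle.} The hardest step is the wrist flow. Its difficulties are:
\begin{itemize}
\item The source distribution must be \(A\)-invariant. This is exact for Haar measure on \(\SO(3)\), but translations require either centering at the object COM (so only \(\SO(3)\) acts nontrivially after centering) or an \(x_A\)-shifted source.
\item One must verify that the VN velocity field together with its scalar time and guidance inputs yields exact left-equivariance of the integrator.
\end{itemize}
We will handle the translation issue by working in centered coordinates, where \(A\) reduces to \(R_A\). The \(q_h\)-conditioning on \(T_w\) is a secondary subtlety. If it uses raw world coordinates, invariance fails, so the proof must explicitly state that the pose is fed through rotation-invariant features relative to \(z_O\).
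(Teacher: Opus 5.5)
Your proof follows the same route as the paper's proof in \Cref{apx:eqv_prf}. Both show that each stage commutes with $A$, including the flow-map identity $\Phi_\theta^{1\leftarrow0}(AT_0;z_OR_A^\top)=A\,\Phi_\theta^{1\leftarrow0}(T_0;z_O)$, and then compose. Yours is more careful at several points the paper skips:
\begin{itemize}
\item \emph{Distributional step.} The paper goes straight from stage-wise commutation to the distributional claim. Your coupling makes explicit that the law of $(T_0,\epsilon)$ must be invariant under $(T_0,\epsilon)\mapsto(AT_0,\epsilon)$, a hypothesis the paper never states.
\item \emph{Translations.} Your centering supplies the $x_A$ part, which VN channel-mixing alone cannot give when $\hat x_w$ and $\hat c_i$ enter as vector channels.
\item \emph{Forces.} You check the Cartesian cone projection of \Cref{sec:method} directly. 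The paper instead detours through invariant local coefficients $\alpha_i$ and \Cref{prop:force_eq}, even though the decoder as described outputs a Cartesian $\tilde f_i$.
\item \emph{Refinement.} Your step-by-step induction for $\phi_{\mathrm{ref}}$ is what a finite-step optimizer needs. Invariance of \eqref{eq:tto_obj}, which is all the paper invokes, does not by itself make the iterates equivariant.
\item \emph{Joint flow.} Your caveat is correct and points to a genuine omission in the paper. \Cref{apx:arch_details} conditions the Real-NVP on the flattened 12-dimensional $T_w$, which is not invariant, while \Cref{apx:remark_jnt_equv} cites only $\|z_O\|_2$. So either argument goes through only if the pose enters through invariant features, as you say.
\end{itemize}

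Two things to tighten. First, after centering, $A$ still acts on the source by $(R_0,x_0)\mapsto(R_AR_0,R_Ax_0)$. Besides Haar measure on the rotation part, you therefore need a rotation-invariant (e.g.\ isotropic) translation marginal for $p_0$; a uniform box, for instance, would break the coupling. Second, step~5 is impossible as literally stated. No frame $B(\hat n)=[t_1,t_2,\hat n]$ that depends on $\hat n$ alone can satisfy $B(R\hat n)=RB(\hat n)$ for all $R$: a nontrivial rotation about $\hat n$ fixes $\hat n$ but moves $t_1$. You must build the tangents from a second equivariant vector, such as $f_{t,i}$, a column of $\hat R_w$, or the centered contact. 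That gives $B_i\mapsto R_AB_i$ except on the degenerate set where that vector's component orthogonal to $\hat n_i$ vanishes. Your Cartesian force argument does not need $B_i$, but the group action in \Cref{def:eq_generator} includes $B_i$. The paper also asserts $B_i'=R_AB_i$ without supplying such a construction.
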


\begin{proof}[Proof Sketch]
The VN-DGCNN encoder is \(\SO(3)\)-equivariant by construction~\cite{deng2021vn}, so \(z_O(A\mathcal{P})=z_O(\mathcal{P})R_A^\top\). Because every downstream module (flow backbone, contact/normal decoders, force decoder with cone projection, and wrist refinement) is either a VN layer conditioned on equivariant inputs or operates in a local frame that co-rotates with \(A\), each stage individually commutes with the group action. The claim follows by composing these equivariant maps. The full stage-by-stage argument appears in \Cref{apx:eqv_prf}, and we provide a discussion on joint configuration invariance in \Cref{apx:remark_jnt_equv}.
\end{proof}

\subsection{Inference \& Candidate Ranking}\label{ssec:inference}
At inference time, we draw \(K = 10\) candidates and rank them by the composite score \(J\) of \eqref{eq:ranking}, instantiated with \(Q_{\mathrm{geom}} = -\mathcal{L}_{\mathrm{coll}}\), \(Q_{\mathrm{phys}} = -\|\graspmap(\hat{C})\hat{F} + w_{\mathrm{ext}}\|_2 - \operatorname{FVR}(\hat{C},\hat{F})\), and \(Q_{\mathrm{task}} = \sigma_{\min}(\graspmap(\hat{C}))\), where \(\graspmap\in\R^{6\times 3M}\) is the grasp map relating per-contact Cartesian forces to the resultant object-frame wrench and \(\sigma_{\min}(\cdot)\) is the minimum singular value. The risk term is \(Q_{\mathrm{risk}} = \tfrac{1}{M}\sum_{i=1}^{M}\operatorname{Var}(\hat{c}_i)\), and we set the weights to \((\beta_1,\beta_2,\beta_3,\beta_4) = (1, 2, 1, 0.5)\). We then apply the wrist refinement of \Cref{ssec:wristproj} to the top candidate, and \Cref{thm:equiv} certifies that the full pipeline preserves SE(3) equivariance.

\section{Experiments}\label{sec:experiments}
We describe the training corpus, the training protocol, and the ablation baselines before turning to results.
\subsection{Grasp Dataset Generation}
We synthesize training data with FRoGGeR~\cite{li2023frogger}, a Drake-based~\cite{drake} min-weight-metric optimizer that solves a nonlinear program over the hand's forward-kinematic tree, maximizing worst-case wrench resistance in the grasp wrench space subject to surface contact, joint-limit, self-collision, and object-collision constraints (\Cref{apx:graspsynth}). The planner reasons over the full kinematic chain of the 16-DoF Allegro Hand (four fingers, each a
four-revolute serial chain), placing fingertip contact patches on the object surface via inverse kinematics and computing the grasp map from the resulting contact normals and positions. We then certify force closure post hoc by checking that the origin lies strictly inside the convex hull of primitive wrenches generated under Coulomb friction cones at every contact.

We generate 100 candidate grasps per object across 81 rigid bodies (49~EGAD~\cite{morrisonEGADEvolvedGrasping2020b}, 28~YCB~\cite{calli2015ycb}, 4~primitives), retaining those that pass force-closure certification for 8{,}100 grasps total, split 80\,/\,10\,/\,10 into 6{,}480 train, 809 validation, and 811 test. Each record stores the object point cloud (\(512\) points), the wrist pose \(T_w\in\SE(3)\), joint angles \(q_h\in\R^{16}\), per-fingertip contacts \(C_i\) and outward surface normals \(n_i\), and LP-optimal contact forces \(f_i\) decomposed into the contact-frame basis \(B_i\). We apply random \(\SO(3)\) augmentation to the point cloud and grasp during training, which the equivariant encoder processes without loss of generality. The full record schema and object taxonomy appear in \Cref{apx:dataprov}.

\subsection{Training \& Evaluation Metrics}\label{ssec:traineval}
We train with Adam~\cite{kingma2015adam} on the schedule in \Cref{tab:hyperparams}. Validation curves (\Cref{fig:train_curves}) show collision and friction hinges staying inactive, consistent with cone projection removing violations by construction (\Cref{tab:loss_components}). We draw 10 candidates per object, rank by \(J\), and report three metric families: \emph{contact fidelity} (per-finger mean error in meters, fraction within \(1\,\mathrm{cm}\)), \emph{force fidelity} (magnitude error in N, direction error in degrees, friction violation rate (FVR)), and \emph{physics score} (Top-1 and mean Top-3 composite \(J\) over \(K{=}10\), plus the wrench residual \(\|\graspmap\hat{F}+w_{\mathrm{ext}}\|_2\)).

\subsection{Baselines}\label{ssec:baselns}%
Since no published method jointly generates a \((q,C,F)\) distribution from point clouds, we ablate by progressive decoder addition over a shared encoder, flow backbone, and training data, varying only the active heads and losses. \textsc{PoseOnly} uses the flow backbone and joint decoder (wrist and joints only). \textsc{ContactOnly} adds the contact decoder and loss. \textsc{GeomOnly} adds the force decoder but zeros the physics losses (\(\lambda_w{=}\lambda_\mu{=}\lambda_{\mathrm{coll}}{=}0\)), training forces from labels alone. \textsc{Full}, reported as \equidexflow{} in \Cref{tab:results}, activates all heads and physics losses.

\section{Results \& Discussion}\label{sec:resdis}
\begin{figure*}[t]
\centering
\begin{subfigure}[t]{0.60\textwidth}
\centering
\includegraphics[width=.98\textwidth]{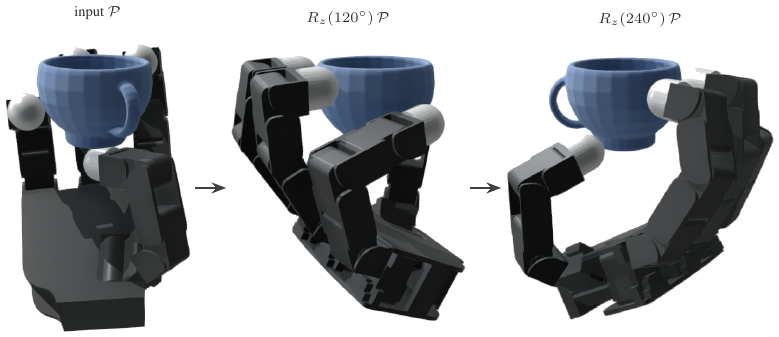}
\caption{Equivariant Co-Rotation}
\label{fig:gallery}
\end{subfigure}
\hfill
\begin{subfigure}[t]{0.38\textwidth}
\centering
\includegraphics[width=\textwidth]{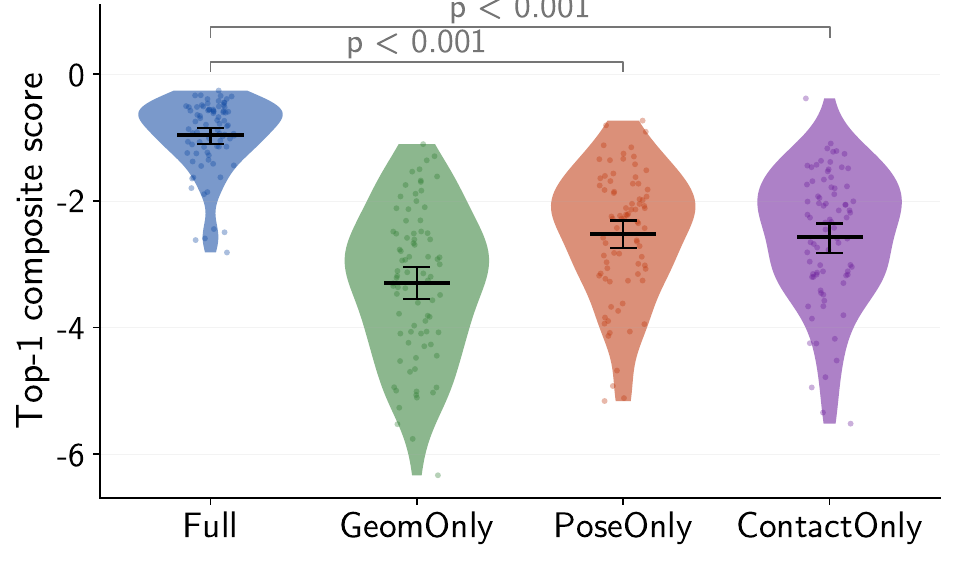}
\caption{Top-1 Score Distribution}
\label{fig:score_dist}
\end{subfigure}
\caption{\textbf{Equivariance and Grasp Quality.} \emph{(a)} Grasps co-rotate with the input under \(R_z\) rotation (wrist residual \({<}0.04^\circ\), \(\max\Delta q_h{=}0\), \Cref{tab:equivariance_binned}). \emph{(b)} Top-1 composite score over 81 test objects (means, BCa bootstrap 95\% intervals, Holm-corrected Wilcoxon \(p\)).}
\label{fig:gallery_score}
\end{figure*}
\Cref{tab:results} summarizes quantitative results. All four variants achieve 0\% friction violations, confirming the architectural guarantee: cone projection enforces Coulomb feasibility regardless of which losses are active. Disabling the projection at test time removes the guarantee (\Cref{ssec:inference_ablation}). Because all variants share the same 81-object split, the comparison isolates each decoder head and loss term. Contact error sits in a tight 0.040 to 0.042\,m band across variants, a gap the wrist refinement of \Cref{ssec:wristproj} closes at execution.

\begin{table}[htb]
\caption{\scshape Grasp Quality on the Test Set (811 Grasps, 10 Samples Each). Friction violation is the fraction with \(\|f_t\|>\mu f_n\), and wrench residual is \(\|\graspmap\hat{F}+w_{\mathrm{ext}}\|_2\). The shaded row is the full model.}
\label{tab:results}
\centering
\small
\setlength{\tabcolsep}{2pt}
\renewcommand{\arraystretch}{0.8}
\begin{tabular}{lcccccc}
\toprule
Method
  & \makecell{Contact\\Err (m)\(\downarrow\)}
  & \makecell{Force\\Err (N)\(\downarrow\)}
  & \makecell{Fric.\ Viol.\\(\%)\(\downarrow\)}
  & \makecell{Top-1\\Score\(\uparrow\)}
  & \makecell{Top-3\\Score\(\uparrow\)}
  & \makecell{Wrench\\Res (Nm)\(\downarrow\)} \\
\midrule
\textsc{PoseOnly}    & \textbf{0.040} & \textbf{1.57} & \textbf{0.0} & \(-\)2.52 & \(-\)3.25 & 1.29 \\
\textsc{ContactOnly} & 0.041 & 1.84 & \textbf{0.0} & \(-\)2.57 & \(-\)3.46 & 1.36 \\
\textsc{GeomOnly}    & 0.041 & 1.84 & \textbf{0.0} & \(-\)3.29 & \(-\)4.02 & 1.58 \\
\rowcolor{cRowOurs}
\textbf{\equidexflow~(ours)} & 0.042 & 1.99 & \textbf{0.0} & \textbf{\(-\)0.96} & \textbf{\(-\)1.18} & \textbf{0.46} \\
\bottomrule
\end{tabular}
\end{table}

\subsection{Ablation Results}\label{ssec:ablation_results}
The \textsc{Full} model attains the best composite scores (Top-1 \(-0.96\), Top-3 \(-1.18\), \Cref{fig:score_dist}) and the lowest wrench residual (\(0.46\,\mathrm{Nm}\)). \textsc{GeomOnly}, which zeros the physics loss weights, trails by \(2.33\) on Top-1 and \(2.84\) on Top-3, the largest gap among comparisons, so the physics losses rather than architecture drive the separation. A Holm-corrected Wilcoxon signed-rank test over \(n{=}81\) paired objects, with BCa bootstrap 95\% confidence intervals over \(R{=}10{,}000\) resamples, confirms \textsc{Full} is better than every ablation (\(p<0.001\)). The largest effect is against \textsc{GeomOnly} (Cohen's \(d_z{=}2.50\) on Top-1 and \(3.70\) on Top-3, far past the conventional \(|d_z|>0.8\) threshold for a large effect). \textsc{PoseOnly} (\(-2.52\)) and \textsc{ContactOnly} (\(-2.57\)) are comparable, while \textsc{GeomOnly} (\(-3.29\)) is weakest because training the force head without the physics losses that consume it yields miscalibrated forces the score penalizes. Force direction error sits at \(54\) to \(65^\circ\) across all variants because the dataset normals are centroid-derived, so depth-sensor normals at inference would lift this shared ceiling. The wrench residual is direction-dominated, consistent with \Cref{prop:norm_cons}.

\subsection{Inference-Time Ablation}\label{ssec:inference_ablation}
We validate our architectural claims by disabling components at inference time on the \textsc{Full} checkpoint without retraining (\Cref{tab:inference_ablations}).

\begin{table}[t]
\captionof{table}{\scshape Inference-Time Ablations (\textsc{Full} Checkpoint, 81 Test Objects, 10 Candidates Each). The shaded row disables the architectural cone projection.}
\label{tab:inference_ablations}
\centering\small
\begin{tabular}{@{}lrrr@{}}
\toprule
Configuration & Top-1\(\uparrow\) & Top-3\(\uparrow\) & FVR\(\downarrow\) \\
\midrule
Stochastic        & \(-\)0.95 & \(-\)1.22 & 0.0 \\
Deterministic\ (\(z{=}0\))   & \(-\)0.97 & \(-\)1.24 & 0.0 \\
\rowcolor{cRowOff}
No cone projection    & \(-\)5.92 & \(-\)5.95 & 100 \\
\bottomrule
\end{tabular}
\end{table}

\textbf{Cone Projection:} Removing the friction-cone projection at test time degrades the composite score by over \(6\times\) (Top-1 from \(-0.96\) to \(-5.92\)) and drives the friction violation rate to 100\%. The raw force vectors from the VN layers carry no inductive bias toward compressive, friction-feasible forces, and the cone projection, a softplus on the normal component followed by tangential clipping at \(\mu f_n\), is what converts arbitrary equivariant vectors into physically valid contact forces. The 0\% violation rate in \Cref{tab:results} is therefore a structural guarantee of the architecture, not an artifact of the training-data distribution.

\textbf{Stochastic vs.\ Deterministic Sampling:} The conditional flow decoder's stochastic mode (\(z \sim \mathcal{N}(0,I)\)) and deterministic mode (\(z=0\), the learned distribution mode) produce indistinguishable quality (Top-1: \(-0.95\) vs.\ \(-0.97\), Top-3: \(-1.22\) vs.\ \(-1.24\)). The flow learns a high-quality mode, and stochastic sampling provides diversity without degradation. Deployments that require deterministic output can use the mode directly with no loss in grasp quality.

\subsection{Empirical Equivariance}\label{ssec:empirical_equivariance}
We verify \Cref{thm:equiv} by sampling 200 \(\SO(3)\) rotations, binning by angle (\(30^\circ\) increments), and measuring residuals against the identity-input prediction (\Cref{tab:equivariance_binned}). Wrist rotation error stays below \(0.04^\circ\) and translation below \(2\times 10^{-3}\,\mathrm{mm}\) across all bins with no degradation at large angles. Per-joint deviation is identically zero across all 2000 samples because the Real-NVP decoder depends only on \(\SO(3)\)-invariant features (\Cref{apx:remark_jnt_equv}). The wrist residual stems from FP32 accumulation in the RK4 integrator. \Cref{fig:eq:six_test} shows six test objects at three input rotations. Per-axis residuals confirm no directional bias, and the FP32 wrist residual vanishes under FP64 arithmetic. The wrist refinement of \Cref{ssec:wristproj} re-solves contact IK against the rotated surface and therefore introduces a small orientation-dependent joint perturbation. We apply this refinement identically at every orientation as a post-hoc execution step, so it is a feature of execution and not a property of the generative model.
\begin{figure}[htb]
\centering
\begin{minipage}[c]{.32\columnwidth}
  \centering
  \includegraphics[width=\linewidth]{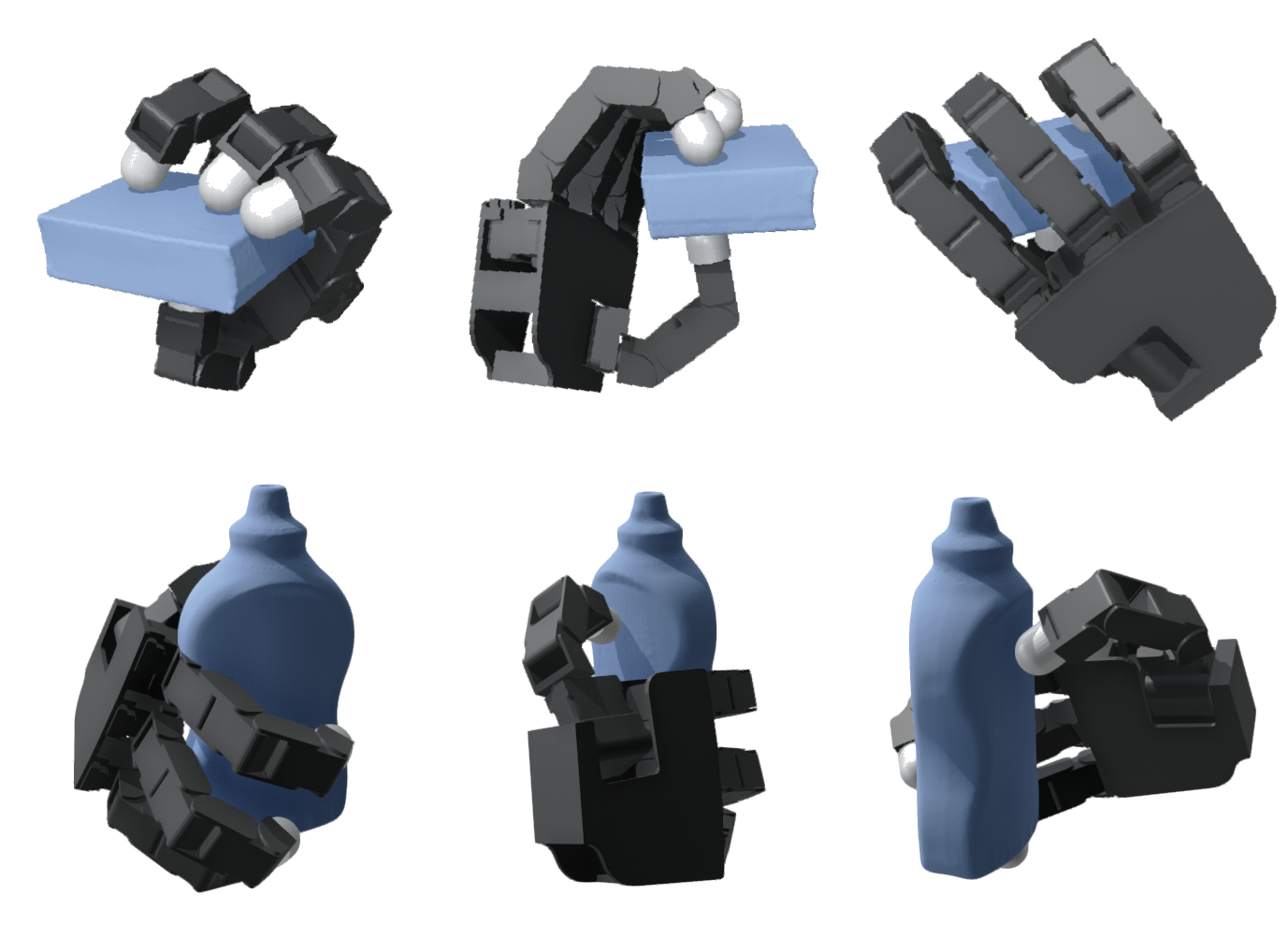}
\end{minipage}%
\hfill
\begin{minipage}[c]{.32\columnwidth}
  \centering
  \includegraphics[width=\linewidth]{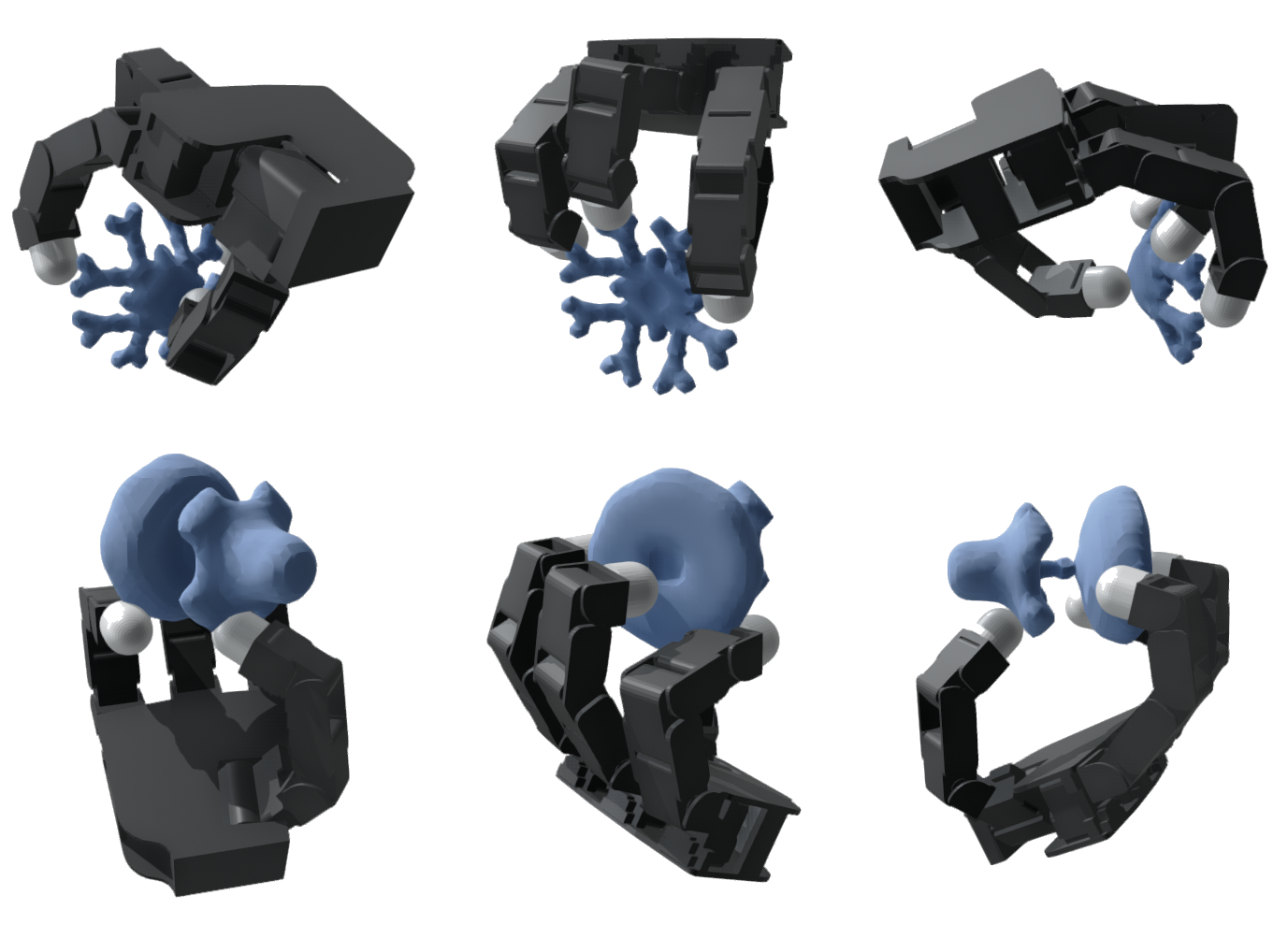}
\end{minipage}%
\hfill
\begin{minipage}[c]{.32\columnwidth}
  \centering
  \includegraphics[width=\linewidth]{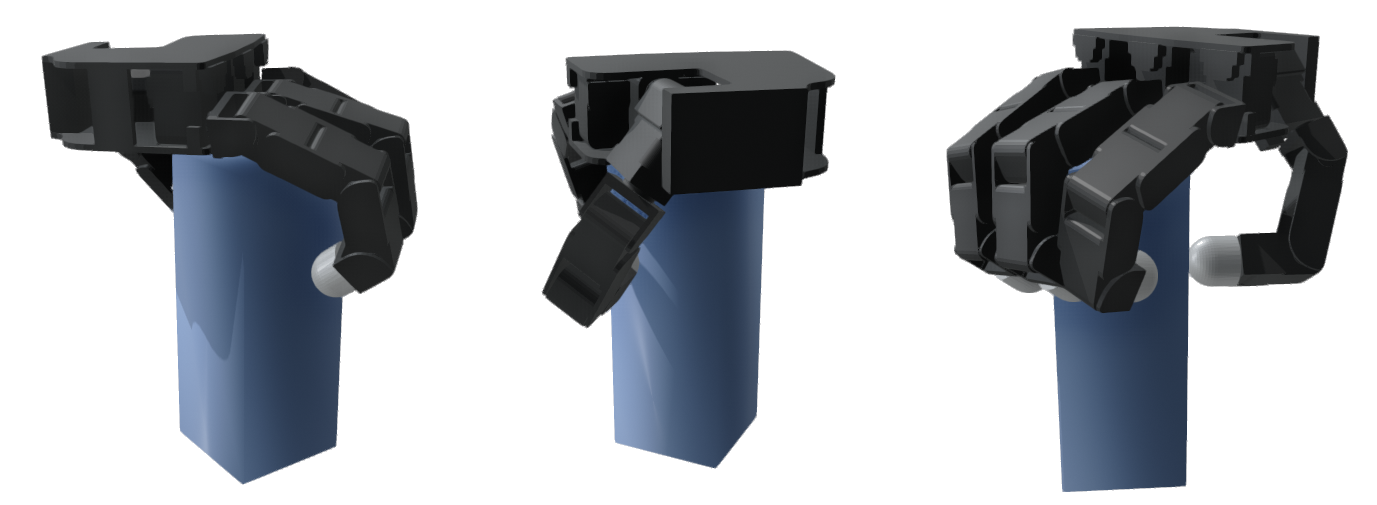}\\[4pt]
  \includegraphics[width=\linewidth]{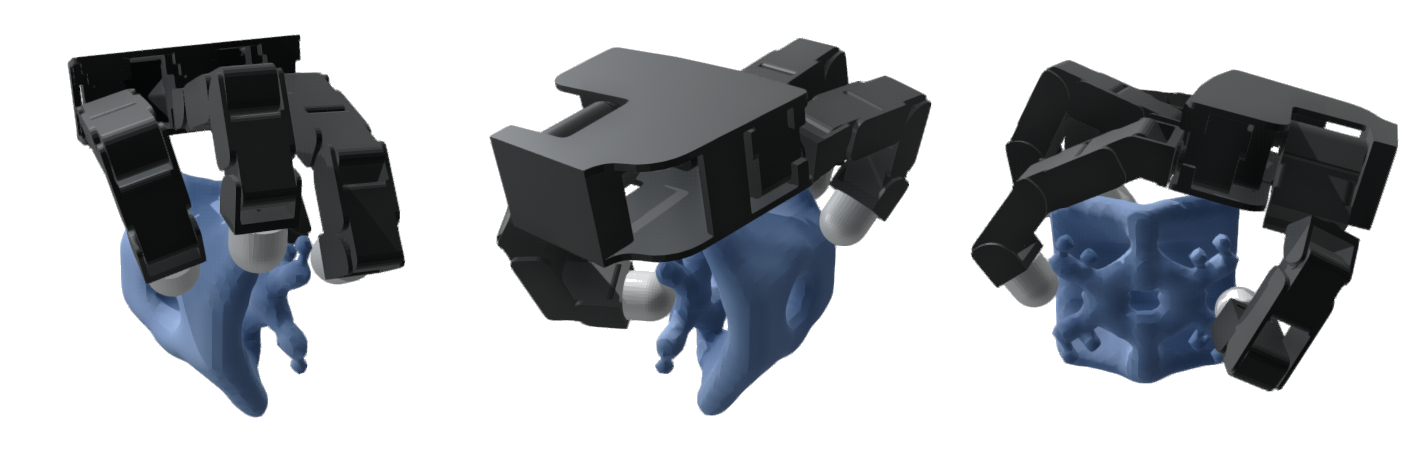}
\end{minipage}
\caption{\textbf{End-to-End SE(3) Equivariance of \equidexflow{} on the 16-DoF Allegro Hand.} Six objects (left to right, then top to bottom in triples: the YCB pudding box, EGAD!\ G6, the box primitive, the YCB mustard bottle, EGAD! E4, and EGAD! G5) across three input rotations (columns: \(0^\circ\), \(120^\circ\), \(240^\circ\) about the vertical axis). The wrist pose and finger configuration co-rotate with the object. The wrist residuals stay below \(0.04^\circ\) (\Cref{tab:equivariance_binned}) and the raw-flow joint deviation is identically zero. The grasps shown are also seated on the surface by the wrist refinement of \Cref{ssec:wristproj}.}
\label{fig:eq:six_test}
\end{figure}

\subsection{Diversity and Coverage}\label{ssec:diversity}
We generate \(K=20\) candidates per object per variant and measure sample diversity and coverage (\Cref{tab:diversity}). Rotation and joint-angle diversity are comparable across variants (\(\sim\)41\(^\circ\) and \(\sim\)0.32\,rad), reflecting the shared SE(3) flow backbone and conditional joint flow. Wrist-translation diversity is wider for \textsc{Full} and \textsc{GeomOnly} (\(\sim\)60\,mm), which use the full generative architecture, than for the head-ablated \textsc{PoseOnly} and \textsc{ContactOnly} variants (\(\sim\)32\,mm). Coverage@\(k\) (C@\(k\) for brevity) is the fraction of test objects where at least one of \(k\) candidates clears the quality threshold (score \(>-4.0\)). It is highest for \textsc{Full} (100\%), followed by \textsc{ContactOnly} (99\%) and \textsc{PoseOnly} (98\%), with \textsc{GeomOnly} weakest at 88\%. Coverage@\(k\) is flat across \(k=1,4,8,20\), so the residual misses are systematic rather than stochastic. \textsc{Full} also shows the tightest contact spread (55.9\,mm vs.\ \(\sim\)130\,mm), consistent with its physics losses concentrating candidates on wrench-feasible contact placements.

\begin{table}[ht!]
\centering
\begin{minipage}[t]{0.46\columnwidth}
\centering
\captionof{table}{\scshape Equivariance Error by Rotation Angle. The \(\max\Delta q_h\) column is identically zero by \Cref{apx:remark_jnt_equv}.}
\label{tab:equivariance_binned}
\small
\setlength{\tabcolsep}{4pt}
\begin{tabular}{@{}lrrr@{}}
\toprule
Bin & \(\Delta R_w\)(\(^\circ\)) & \(\Delta x_w\)(mm) & \(\max\Delta q_h\)(\(^\circ\)) \\
\midrule
  \(30\)--\(60\)  & \(3.7\!\times\!10^{-2}\) & \(2.0\!\times\!10^{-3}\) & \(0.00\) \\
  \(60\)--\(90\)  & \(3.7\!\times\!10^{-2}\) & \(2.0\!\times\!10^{-3}\) & \(0.00\) \\
  \(90\)--\(120\) & \(3.6\!\times\!10^{-2}\) & \(1.0\!\times\!10^{-3}\) & \(0.00\) \\
  \(120\)--\(150\) & \(3.6\!\times\!10^{-2}\) & \(1.7\!\times\!10^{-3}\) & \(0.00\) \\
  \(150\)--\(180\) & \(3.6\!\times\!10^{-2}\) & \(2.2\!\times\!10^{-3}\) & \(0.00\) \\
\midrule
\multicolumn{4}{@{}l}{\textit{Per-axis (50\(\times\)5, wrist only)\(^\dagger\):}} \\
  \(X\) & \(3.5\!\times\!10^{-2}\) & \(3.6\!\times\!10^{-5}\) & -- \\
  \(Y\) & \(4.2\!\times\!10^{-2}\) & \(3.5\!\times\!10^{-5}\) & -- \\
  \(Z\) & \(3.5\!\times\!10^{-2}\) & \(3.6\!\times\!10^{-5}\) & -- \\
\bottomrule
\multicolumn{4}{@{}l}{\tiny\(^\dagger\)Wrist-only directional-bias check.}
\end{tabular}
\end{minipage}\hfill
\begin{minipage}[t]{0.50\columnwidth}
\centering
\captionof{table}{\scshape Diversity and Coverage (\(K{=}20\) samples/object, 81 objects).}
\label{tab:diversity}
\small
\setlength{\tabcolsep}{2pt}
\begin{tabular}{@{}lrrrrrr@{}}
\toprule
Variant & Tr. & Rot. & Jt. & Spr. & C@1 & C@8 \\
        & (mm) & (\(^\circ\)) & (rad) & (mm) & & \\
\midrule
\rowcolor{cRowOurs}
\textsc{Full}        & 59.8 & 41.1 & 0.321 & 55.9  & 100\% & 100\% \\
\textsc{GeomOnly}    & 63.0 & 40.5 & 0.323 & 132.5 & 88\%  & 88\% \\
\textsc{PoseOnly}    & 31.8 & 41.1 & 0.323 & 136.0 & 98\%  & 98\% \\
\textsc{ContactOnly} & 31.6 & 41.5 & 0.322 & 135.9 & 99\%  & 99\% \\
\bottomrule
\end{tabular}
\end{minipage}
\end{table}

\begin{figure}[htb]
\centering
\begin{tabular}{@{}c@{\hskip 4pt}c@{\hskip 4pt}c@{}}
\includegraphics[width=0.325\linewidth]{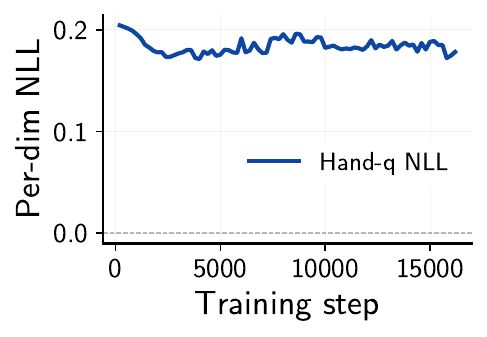} &
\includegraphics[width=0.325\linewidth]{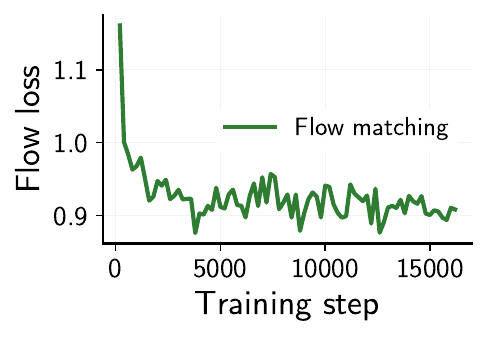} &
\includegraphics[width=0.325\linewidth]{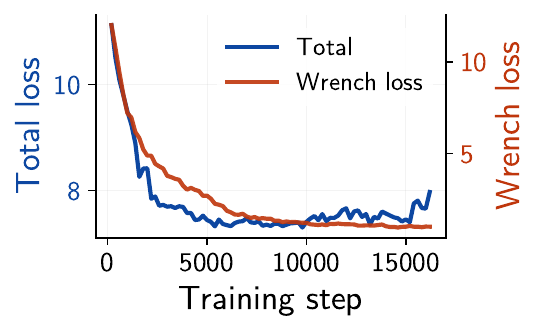}
\end{tabular}
\caption{\textbf{Validation Curves (\textsc{Full} Model).} Joint-angle NLL (\emph{left}, \(\to\!0.18\) nats), SE(3) flow loss (\emph{center}, \(\to\!0.9\)), and total loss (\emph{right}, \(\sim\!11\!\to\!7.5\)) with the wrench-balance loss (\(\to\!1\)). Loss components in \Cref{tab:loss_components}.}
\label{fig:train_curves}
\end{figure}

\subsection{Hardware Experiments}\label{ssec:hardware}
\equidexflow{} targets the Allegro Hand. However, our hardware platform comprises a 6-DoF ZArm manipulator\footnote{The ZArm 6-DoF arm: \url{https://www.frtech.fr/FR/5.html}.} fitted with the LEAP Hand~\cite{shaw_leap_2023}, so we retarget the decoded fingertip contacts through LEAP inverse kinematics (Drake's \texttt{InverseKinematics}) rather than re-training. Across 500 grasps over five objects the retarget converges for 499, with a mean fingertip residual of \(14\,\mathrm{mm}\), roughly \(3\times\) tighter than the \SIrange{29}{54}{\milli\meter} residual of a naive joint-offset transfer that ignores the finger-length difference between hands. Equivariance (\Cref{thm:equiv}) fixes \emph{which} pose to execute at each object rotation (the canonical grasp transformed by the same group element). Reachability is a separate question set by the arm's fixed, non-rotation-symmetric workspace. We select the highest-ranked grasp whose full pick sequence (pre-grasp, grasp, closure, lift) clears the worktable (\Cref{fig:hardware}). For the four representative objects in \Cref{tab:hw_reach}, the retargeted top-ranked grasp is reachable at both the canonical pose and a \(120^\circ\) rotation about the vertical axis. On the physical robot, all four asymmetric objects (the cube, box primitive, potted meat can, and mustard bottle) complete the full open-loop pick and hold at both the canonical pose and its \(120^\circ\) co-transform, and the symmetric cylinder primitive and tennis ball complete it at the canonical pose (\Cref{sec:supp_hw_protocol}). These outcomes are the 16-DoF dexterous counterpart to prior parallel-jaw equivariant grasping~\cite{lim2025equigraspflow}. Videos on the project website show the simulation validation alongside the canonical and \(120^\circ\) hardware executions (pre-grasp through lift).

\paragraph{Hardware-Feasible Refinement:}
For hardware execution, naive grasp retargeting is hardly sufficient. For instance, the retargeted hand vector \(\tilde{q}_h\) may often seat one or more joints against URDF hard limits (we recorded the most cases with the LEAP Hand's \texttt{thumb\_cmc\_rotation}), and the current-limited Dynamixel Mode-5 actuators cannot drive against that limit. This causes the affected fingertip to stall short of its target \(p_i\), and the four-finger wrap degenerates. We restore a comfort margin by projecting \(\tilde{q}_h\) one finger at a time. For the LEAP hand each of the \(M{=}4\) fingers contributes four joints to the \(D{=}16\) hand vector, so the sub-vector for finger \(m\) is \(\smash{q_h^{(m)} \in \R^{4}}\). For each \(m\) we solve
\begin{equation}\label{eq:hw_refine}
\min_{q_h^{(m)}}\;\sum_{j=1}^{4}\!\left(\frac{q_{h,j}^{(m)} - q_{\mathrm{mid},j}}{q_{\mathrm{half},j}}\right)^{\!\!4}
+ w_{\mathrm{tip}}\,\bigl\|g_m(T_w,q_h^{(m)}) - g_m(T_w,\tilde{q}_h^{(m)})\bigr\|^2
\;\;\text{s.t.}\;\; q_{\mathrm{lo}} + \varepsilon_h \le q_h^{(m)} \le q_{\mathrm{hi}} - \varepsilon_h,
\end{equation}
where \(q_{\mathrm{mid}}\) and \(q_{\mathrm{half}}\) are the midpoint and half-range of each joint's hardware envelope, \(g_m\) is the finger-\(m\) forward kinematics of \Cref{ssec:wristproj}, \(w_{\mathrm{tip}}{=}10^6\) drives a soft equality on the fingertip \(p_i\), and \(\varepsilon_h\) holds every joint at least \(5\%\) of its range inside the URDF limit. The four-joint chain with a three-DoF fingertip leaves a one-DoF kinematic nullspace, and the optimizer slides \(\smash{q_h^{(m)}}\) along it toward \(\smash{q_{\mathrm{mid}}}\). Because LEAP fingers are direct-driven and fully actuated, the four subproblems decouple. On the three holdout objects whose decoded grasps saturated thumb DoFs (the YCB mustard bottle, tennis ball, and cylinder primitive), the minimum hand clearance rises from \(0\%\) of joint range to between \(7\%\) and \(19\%\) with sub-millimeter fingertip drift (worst case \(0.39\,\mathrm{mm}\)), so the wrench-balance set used to construct \(\G\) survives the projection to tactile-noise resolution. A companion arm-side projection refines the arm vector inside the feasible set \(\mathcal{A}\) of \eqref{eq:rb_feas_set} via Drake's \texttt{InverseKinematics}: starting from the retargeted arm we minimize \(\|q_a - \tilde{q}_a\|^2\) subject to \(\|g_{\mathrm{palm}}(q_a) - T_w\|\) within \(5\,\mathrm{mm}/2^\circ\) and a \(\varepsilon_a{=}0.05\,\mathrm{rad}\) margin from each URDF arm limit, with \(q_a\) denoting the arm's joint configuration and \(\tilde{q}_a\) the retargeted arm vector produced by the initial retargeting stage. Because the 6-DoF arm offers no kinematic redundancy to slide along, the regularizer is anchored at \(\tilde{q}_a\) rather than the per-joint midpoint used on the hand side. The projection lifts the worst-case arm-joint clearance from below \(1^\circ\), where our motion planner's joint-padding check rejects the goal outright, to a uniform \(\ge 2.86^\circ\) at a palm-pose cost of \(\le 6\,\mathrm{mm}\), well inside the standoff buffer the pre-grasp phase already carries.

\paragraph{Scene-Aware Generation:}
The refinement in \eqref{eq:hw_refine} accepts any scene description that supplies a hand kinematic model, the world-frame object pose \(\smash{{}^{W}T_O}\) that instantiates \(\smash{\mathcal{A}({}^{O}G;{}^{W}T_O,\mathcal{E})}\), and an environment set \(\mathcal{E}\) of static fixtures. Under that interface, \equidexflow{} emits the candidate \(\G\), the per-finger projection turns it into an executable \(q_h\), and the arm IK yields a \((q_a, q_h)\) pair in \(\mathcal{A}\) without colliding with any fixture in \(\mathcal{E}\). Collision awareness is inherited from whichever fixtures the planner registers in \(\mathcal{E}\), so the same generator transfers to a new tabletop, pedestal, or obstacle set without retraining.

\begin{figure}[htb]
\centering
\setlength{\fboxsep}{0pt}
\setlength{\fboxrule}{0.4pt}
\begin{minipage}[t]{\linewidth}
%% LEFT COLUMN: setup photo and table, stacked
\begin{minipage}[c]{0.38\linewidth}
\centering
\fbox{\includegraphics[
  width=0.8\dimexpr\linewidth-2\fboxrule\relax
]{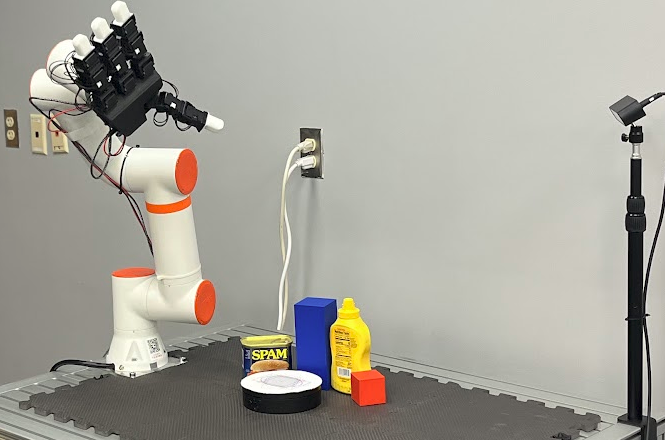}}\\[6pt]
\footnotesize
\setlength{\tabcolsep}{1pt}
\renewcommand{\arraystretch}{0.95}
\captionof{table}{\scshape Retarget Reachability.\\IK Tip = mean fingertip residual.}
\label{tab:hw_reach}
\begin{tabular}[t]{@{}lccc@{}}
\toprule
Object & IK Tip & \(0^\circ\) & \(120^\circ\) \\
       & (mm)   &             &               \\
\midrule
Cube    & 14.3 & Yes & Yes \\
Box     & 14.4 & Yes & Yes \\
Meat Can   & 14.5 & Yes & Yes \\
Mustard & 14.3 & Yes & Yes \\
\midrule
\textbf{Reach.} & & \(\mathbf{4/4}\) & \(\mathbf{4/4}\) \\
\bottomrule
\end{tabular}
\end{minipage}%
\hspace{-10pt}
%% RIGHT COLUMN
\begin{minipage}[c]{0.3\linewidth}
\centering
\setlength{\tabcolsep}{2pt}
\renewcommand{\arraystretch}{0.95}
\begin{tabular}{@{}c@{\hskip 2pt}c@{\hskip 2pt}c@{\hskip 2pt}c@{}}
& {\footnotesize Pre-Grasp}
& {\footnotesize Grasp \& Close}
& {\footnotesize Lift} \\
\multicolumn{4}{@{}l@{}}{\scriptsize\textit{\quad \color{darkgray}Box Primitive}} \\
\rotatebox{90}{\footnotesize \(0^\circ\)}
& \fbox{\includegraphics[width=15mm, ]{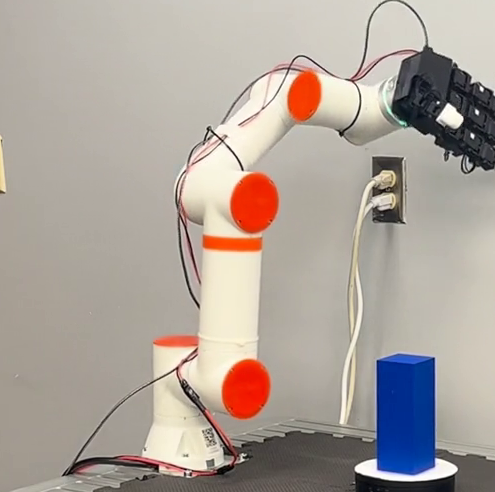}}
& \fbox{\includegraphics[width=15mm, ]{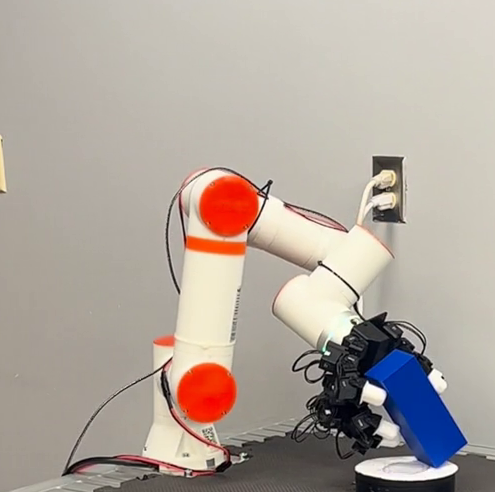}}
& \fbox{\includegraphics[width=15mm, ]{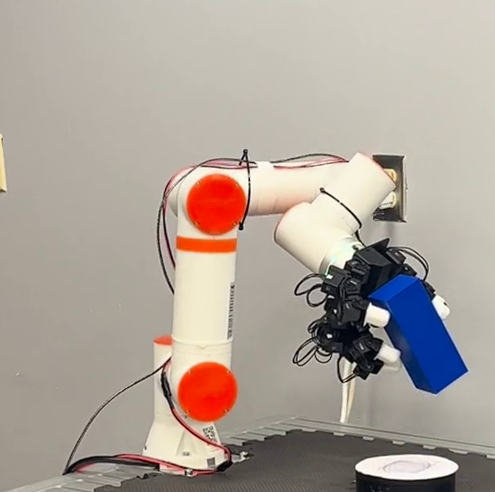}} \\
\rotatebox{90}{\footnotesize \(120^\circ\)}
& \fbox{\includegraphics[width=15mm, ]{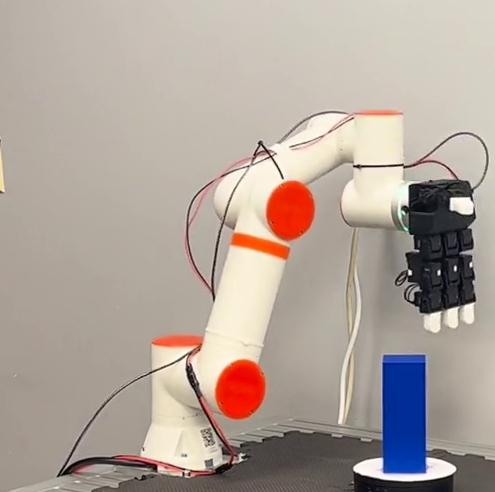}}
& \fbox{\includegraphics[width=15mm, ]{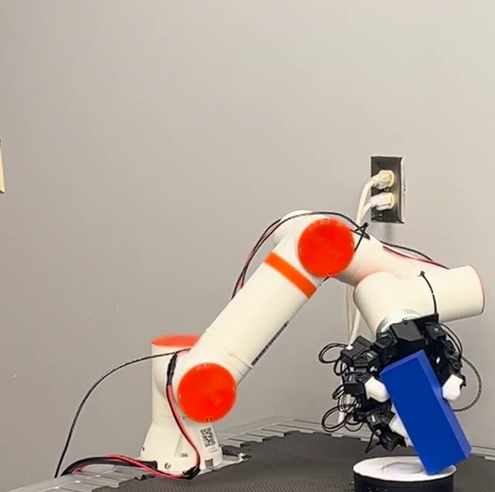}}
& \fbox{\includegraphics[width=15mm, ]{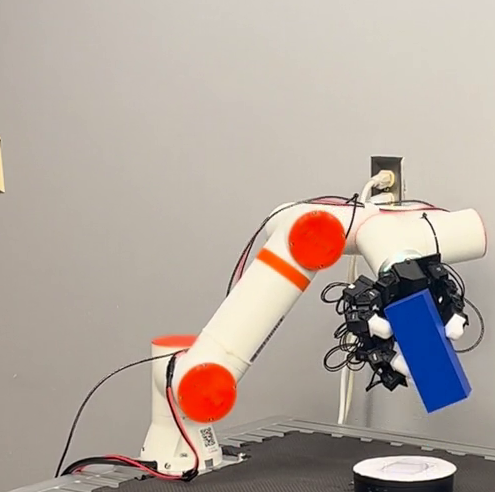}} \\[4pt]
\multicolumn{4}{@{}l@{}}{\scriptsize\textit{\quad \color{darkgray}Potted Meat Can}} \\
\rotatebox{90}{\footnotesize \(0^\circ\)}
& \fbox{\includegraphics[width=15mm, ]{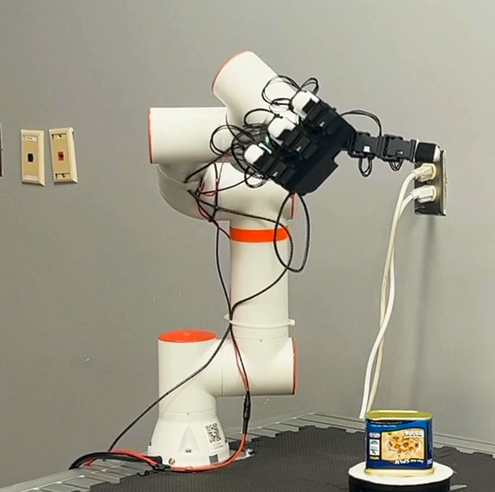}}
& \fbox{\includegraphics[width=15mm, ]{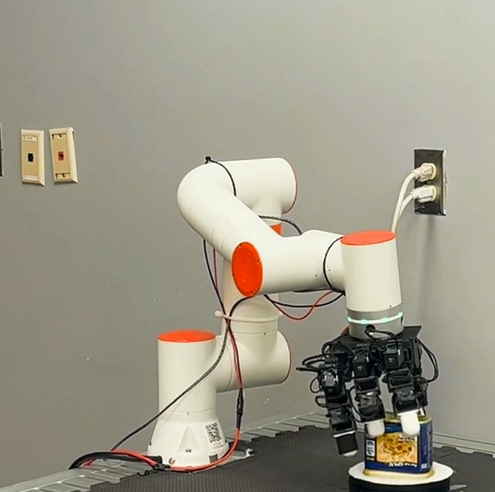}}
& \fbox{\includegraphics[width=15mm, ]{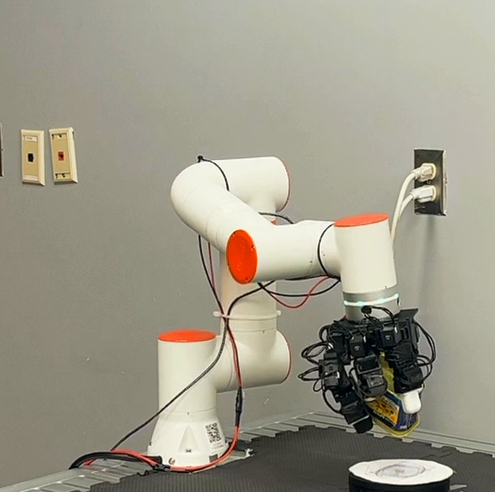}} \\
\rotatebox{90}{\footnotesize \(120^\circ\)}
& \fbox{\includegraphics[width=15mm, ]{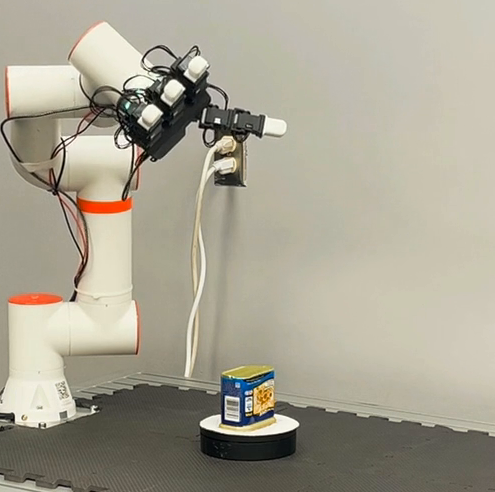}}
& \fbox{\includegraphics[width=15mm, ]{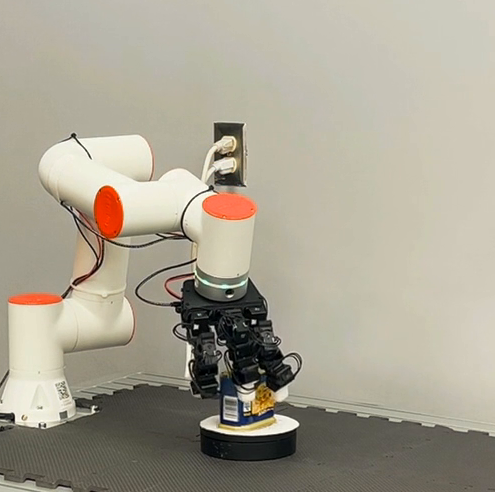}}
& \fbox{\includegraphics[width=15mm, ]{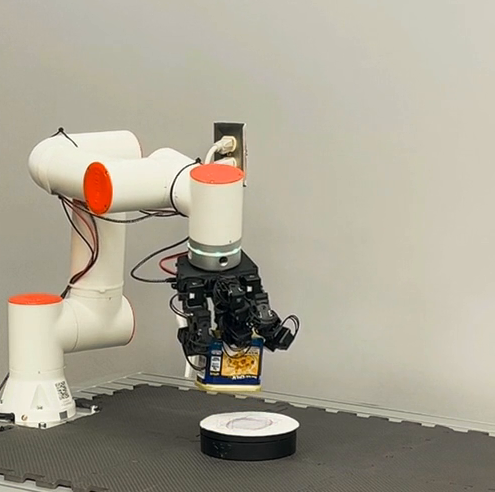}}
\end{tabular}
\end{minipage}
\hfill
\begin{minipage}[c]{0.3\linewidth}
\centering
\setlength{\tabcolsep}{2pt}
\renewcommand{\arraystretch}{0.95}
\begin{tabular}{@{}c@{\hskip 2pt}c@{\hskip 2pt}c@{\hskip 2pt}c@{}}
& {\footnotesize Pre-Grasp}
& {\footnotesize Grasp \& Close}
& {\footnotesize Lift} \\
\multicolumn{4}{@{}l@{}}{\scriptsize\textit{\quad \color{darkgray}Cube}} \\
\rotatebox{90}{\footnotesize \(0^\circ\)}
& \fbox{\includegraphics[width=15mm, ]{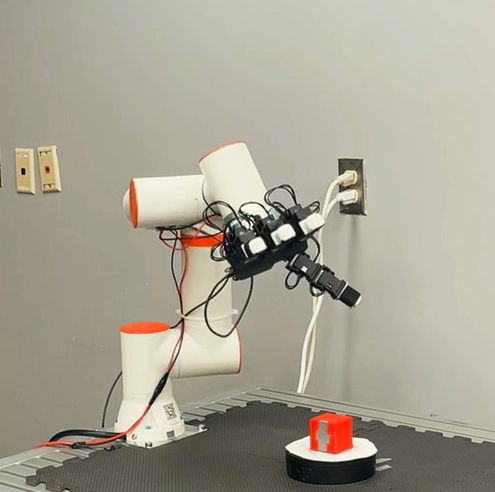}}
& \fbox{\includegraphics[width=15mm, ]{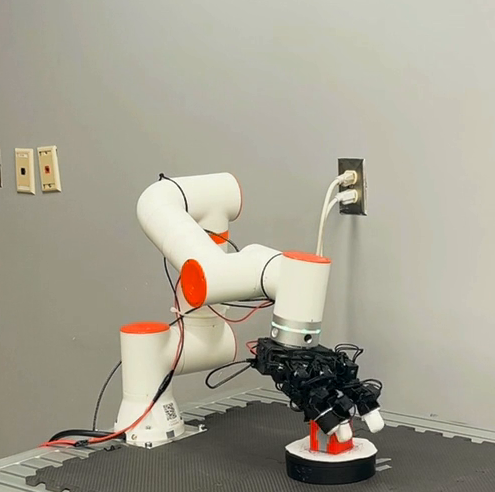}}
& \fbox{\includegraphics[width=15mm, ]{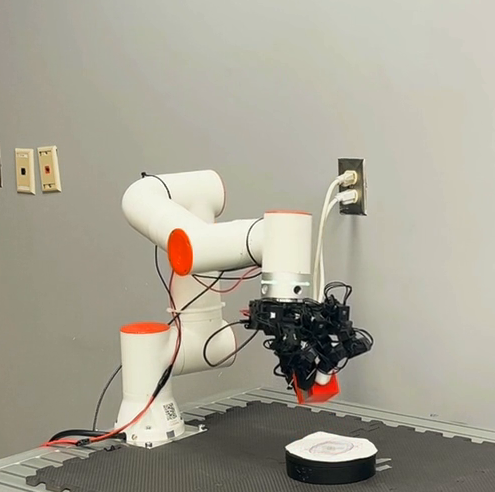}} \\
\rotatebox{90}{\footnotesize \(120^\circ\)}
& \fbox{\includegraphics[width=15mm, ]{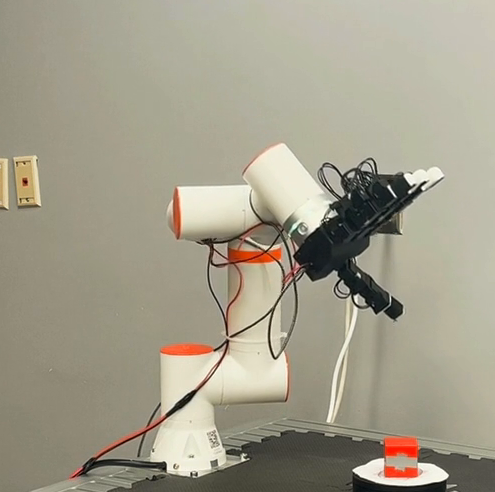}}
& \fbox{\includegraphics[width=15mm, ]{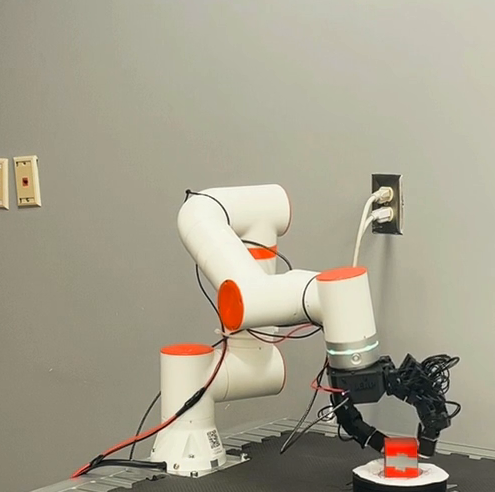}}
& \fbox{\includegraphics[width=15mm, ]{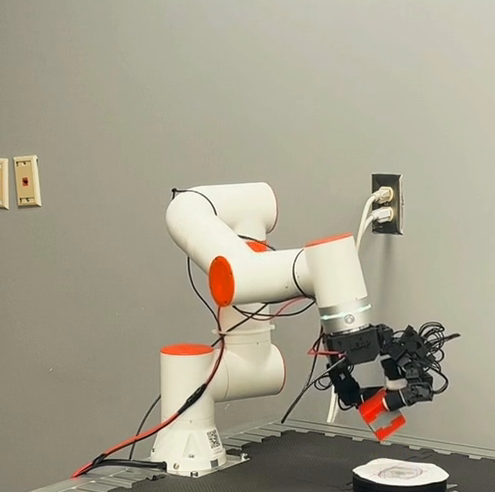}} \\[4pt]
\multicolumn{4}{@{}l@{}}{\scriptsize\textit{\quad \color{darkgray}Mustard Bottle}} \\
\rotatebox{90}{\footnotesize \(0^\circ\)}
& \fbox{\includegraphics[width=15mm, ]{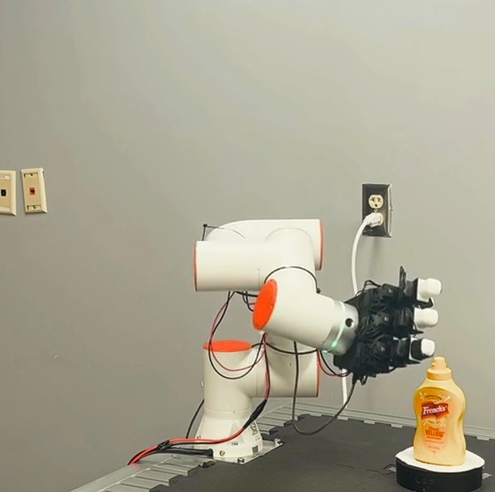}}
& \fbox{\includegraphics[width=15mm, ]{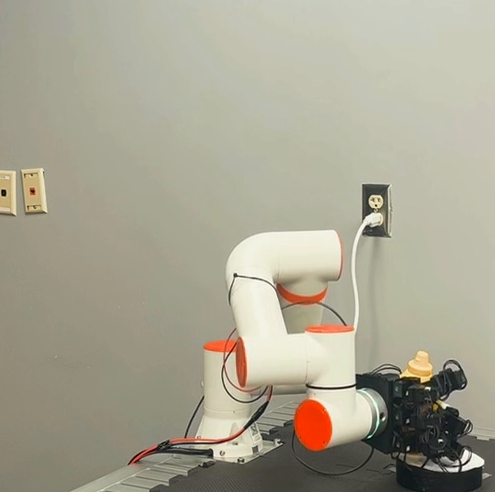}}
& \fbox{\includegraphics[width=15mm, ]{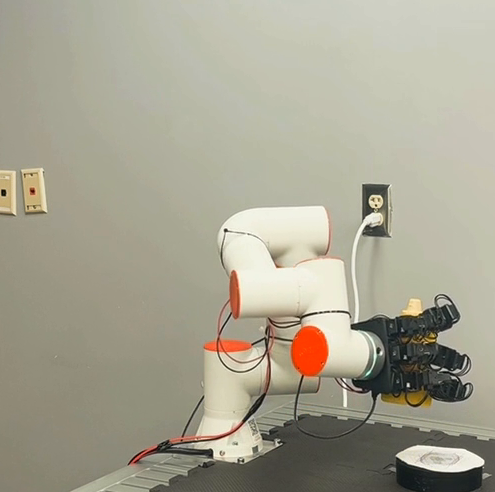}} \\
\rotatebox{90}{\footnotesize \(120^\circ\)}
& \fbox{\includegraphics[width=15mm, ]{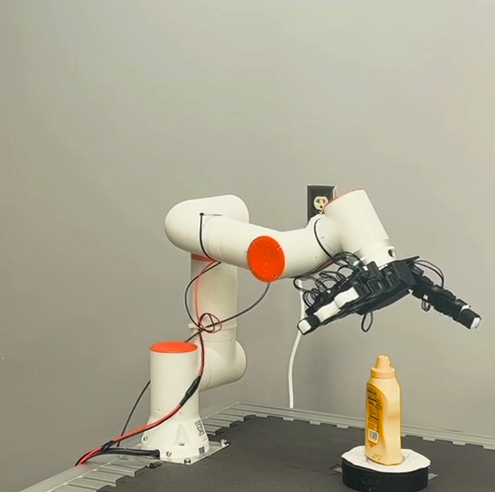}}
& \fbox{\includegraphics[width=15mm, ]{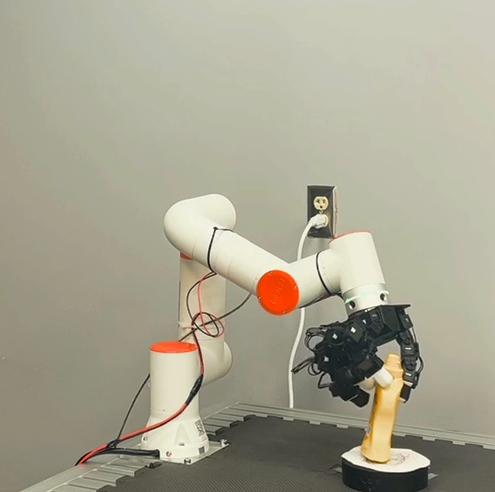}}
& \fbox{\includegraphics[width=15mm, ]{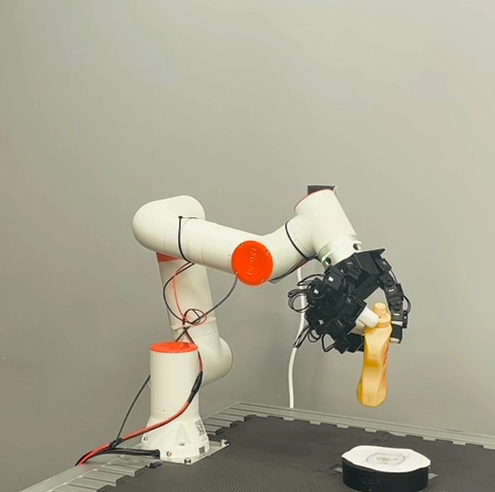}}
\end{tabular}
\end{minipage}
\end{minipage}
\vspace{-1mm}
\caption{\textbf{Hardware Platform and Equivariant Execution.} \emph{Left column, top}: ZArm-mounted LEAP Hand and object-silhouette-marked pedestal (black, tabletop-centered, wide cylinder), with the same decoded and retargeted grasp (\emph{middle and right columns}) executed from pre-grasp through lift at the canonical pose (\(0^\circ\)) and after a \(120^\circ\) object rotation for four representative test objects. The ZED 2i stereo camera in the setup view supplies the single-shot object-pose advisory consumed before each grasp execution (\Cref{sec:supp_perception}).}
\label{fig:hardware}
\end{figure}

\subsection{Limitations}\label{par:limitations}
Several aspects of the design admit principled refinements, which we treat as natural extensions, not fundamental obstacles. The wrist refinement commutes with rigid motions in exact arithmetic (\Cref{apx:eqv_prf}), yet its floating-point implementation re-solves contact IK against the rotated surface and introduces a small orientation-dependent joint perturbation, which an exactly invariant projection operator would remove. The refinement is also a local optimization that can settle in a shallow optimum, and an SE(3)-aware trust-region schedule would harden the convergence. Force-direction fidelity is bounded by centroid-based dataset normals (\Cref{prop:norm_cons}), so depth-sensor normals at inference are a direct improvement. Our formulation targets rigid objects with simulated labels, and the hardware evaluation (\Cref{ssec:hardware}) spans six test objects under open-loop execution. A larger-scale physical success-rate study, deformable objects, and closed-loop manipulation are natural next steps.

\section{Conclusion}\label{sec:conclusion}
Architectural cone projection and surface projection guarantee friction and contact feasibility by construction, while physics-aware losses and a composite ranking close the gap between generative plausibility and manipulation feasibility. Across the 81-object evaluation set, \equidexflow{} attains 0\% friction violations, the best composite physics score among ablations, and a wrist equivariance residual below \(0.04^\circ\). Because the learned contacts and forces live in the object frame, the representation is embodiment-agnostic: we retarget the same generator from the 16-DoF Allegro Hand to a physical LEAP Hand through an IK adapter alone, with per-finger residuals of 5 to 13\,mm, and complete open-loop pick-and-hold trials on all six hardware test objects. Deformable manipulation and closed-loop control are the natural next directions.

\bibliographystyle{unsrt}  
\bibliography{references}

%%%%%%%%%%%%%%%%%%%%%%%%%
% APPENDIX
%%%%%%%%%%%%%%%%%%%%%%%%%
\clearpage
\appendix
\makeatletter
\renewcommand{\thesubsection}{\Roman{subsection}}
\renewcommand{\p@subsection}{\thesection.}
\makeatother
\section*{Appendix}
\section{Architecture Details}\label{apx:arch_details}
The VN-MLP is parameterized by four VN-Linear-LeakyReLU layers (widths in \Cref{tab:hyperparams}), and its input concatenates the 341-channel object feature \(z_O\), the wrist pose as four vector channels (the three rotation-matrix rows and the translation, each a 3-vector), and a lifted scalar time channel obtained by multiplying \(t\) against a learned equivariant basis from \(z_O\), following~\cite{lim2025equigraspflow}. At inference, we integrate from \(T_0\) to \(T_1\) using a Munthe-Kaas fourth-order Runge-Kutta solver~\cite{munthekaas1998rungekutta} on \(\SE(3)\). The Munthe-Kaas scheme maps the current group element to the Lie algebra, performs a classical RK4 step with Lie-bracket corrections, and maps back via the exponential, preserving the group structure at each step. We apply classifier-free guidance~\cite{ho2022cfg} with unconditional drop probability \(p_\mathrm{uncond}\) during training and guidance scale \(\lambda_{\mathrm{cfg}}\) at sampling (\Cref{tab:hyperparams}), yielding the guided velocity field
\begin{equation}
  v_{\mathrm{cfg}} = (1 - \lambda_{\mathrm{cfg}})\,v_{\emptyset} + \lambda_{\mathrm{cfg}}\,v_\theta(T_t, z_O, t).
\end{equation}
\subsection{Contact Decoder}
Given the predicted wrist pose \(\hat{T}_w=(\hat{R}_w,\hat{x}_w)\) and object features \(z_O\), the contact decoder predicts per-finger contact positions. An equivariant path (widths in \Cref{tab:hyperparams}) produces four contact offsets in \(\R^3\) anchored to \(\hat{x}_w\), and a parallel scalar path flattens \(z_O\) and emits four per-finger confidence logits. The raw contacts are then projected onto the object surface via a differentiable soft-nearest-neighbor operation. For each predicted contact \(\hat{c}_i\), \(\hat{c}_i'=\sum_j w_{ij}s_j\) with \(w_{ij}=\mathrm{softmax}_j(-\|\hat{c}_i-s_j\|^2/\tau)\) and \(s_j\) the input surface points. Temperature \(\tau\) (\Cref{tab:hyperparams}) yields near-hard assignment while preserving gradient flow. Surface normals at each projected contact are interpolated by the same weights. Predicted contacts therefore lie on the object surface by construction, eliminating the off-surface drift of free-space regression. The decoder output is \(\hat{C}\in\R^{4\times3}\) (one contact per finger).%

\subsection{Normal Decoder}
The cone projection in the force decoder decomposes each contact force into normal and tangential components relative to an estimated surface normal \(\hat{n}_i\). A naive centroid-to-contact estimate locks force directions to a fixed geometry, shrinking the wrench-balancing feasible set and inflating the wrench residual. To decouple normal estimation from force optimization, a dedicated normal decoder predicts per-finger inward surface normals \(\hat{N}=[\hat{n}_1,\ldots,\hat{n}_4]^\top\in\R^{4\times3}\). For each finger, the decoder concatenates the global features \(z_O\) with the predicted contact \(\hat{c}_i\) (yielding \(\R^{342\times3}\)) and passes the result through a small VN stack (\Cref{tab:hyperparams}) to produce a unit normal. The contact channel provides per-finger spatial grounding. Without it, shared global features make VN layers collapse to collinear outputs. We \(\ell_2\)-normalize the per-finger outputs, and because the path is fully equivariant we have \(\hat{n}_i'=R\hat{n}_i\). Unlike the contact-to-force path, we apply no stop-gradient on the predicted contacts entering the normal decoder, so gradients from \(\mathcal{L}_N\) flow back to the contact decoder and steer contacts toward surface regions with well-defined normals.

\subsection{Joint Configuration Flow}
The flow operates in an unbounded logit space. Each joint angle \(\theta_j\) is mapped to \(x_j=\mathrm{logit}\bigl((\theta_j-\theta_j^{\min})/(\theta_j^{\max}-\theta_j^{\min})\bigr)\), where \([\theta_j^{\min},\theta_j^{\max}]\) are the per-joint kinematic limits of the Allegro Hand. The flow stack uses 8 affine coupling layers with alternating masks (\Cref{tab:hyperparams}), each conditioned on a 128-dim vector fused from the \(\SO(3)\)-invariant features \(\|z_O\|_2\in\R^{341}\) and the wrist pose \(T_w\) (flattened to 12 dims). At training time, the loss is the negative log-likelihood under the change-of-variables formula
\begin{equation}
  \mathcal{L}_q = -\frac{1}{D}\log p_\theta^q(q_h^* \mid z_O, T_w),
\end{equation}
where \(D\) (\Cref{tab:hyperparams}) is the number of joint DoFs and the per-dimension normalization prevents the NLL magnitude from dominating other losses. At inference, stochastic samples are drawn by passing Gaussian noise through the inverse coupling layers and applying the sigmoid map back to joint limits. The hand model is the Allegro Hand (right) with four fingers and joint-angle order \([\theta^{\mathrm{index}}_0, \ldots, \theta^{\mathrm{thumb}}_3]\), where index, middle, ring, and thumb each contribute four revolute joints (one abduction joint and three flexion joints: metacarpophalangeal, proximal, and distal) ordered base-to-tip.

\section{Full Equivariance Proof}\label{apx:eqv_prf}
\begin{proof}[Proof of \Cref{thm:equiv}]
Fix a rigid transform \(A=(R_A,x_A)\in\SE(3)\). It is enough to show that every map in the inference pipeline commutes with \(A\), since equivariant maps are closed under composition. Since the VN-DGCNN encoder satisfies \(z_O(A\mathcal{P})=z_O(\mathcal{P})R_A^\top\) (because VN-Linear layers act only on the channel axis and leave the spatial axis untouched~\cite{deng2021vn}), the SE(3) flow backbone exploits this property by integrating the VN-MLP velocity field \(v_\theta(T_t,z_O,t)\) under this conditioning. Recalling that rotating the conditioning by \(R_A\) and the source sample by \(A\) commutes with the Lie-group integrator on \(\SE(3)\)~\cite{chen2023riemannian,munthekaas1998rungekutta}, we can write  
\(
\Phi_\theta^{1\leftarrow0}(AT_0;\,z_O R_A^\top)=A\,\Phi_\theta^{1\leftarrow0}(T_0;\,z_O).
\)
The contact and normal decoders are pure VN layers conditioned on \(z_O\) and the (now equivariant) wrist, so equivariance propagates through their linear channel-axis operations and gives \(\hat{C}'=R_A\hat{C}+x_A\), \(\hat{N}'=R_A\hat{N}\), and contact frames \(B_i'=R_AB_i\). The force decoder predicts the local coefficients \(\alpha_i\) of \Cref{prop:norm_cons}, which the VN architecture produces as rotation-invariant scalars from equivariant features. Because the cone projection \(\Pi_\mu\) acts entirely in the local contact frame, it commutes with \(R_A\), and \Cref{prop:force_eq} yields \(\hat{f}_i'=R_A\hat{f}_i\). Finally, the wrist refinement \(\phi_{\mathrm{ref}}\) of \Cref{ssec:wristproj} minimizes objective~\eqref{eq:tto_obj}, which assembles only frame-invariant functions of equivariant quantities (forward kinematics, decoded contacts \(\hat{C}\), and the mesh signed distance \(\Phi_M\) with \(\Phi_{AM}(Ap)=\Phi_M(p)\)). Therefore
\[
\phi_{\mathrm{ref}}(A\hat{T}_w,\hat{q}_h,AM)=A\,\phi_{\mathrm{ref}}(\hat{T}_w,\hat{q}_h,M).
\]
Composing these equivariant maps and invoking joint-angle invariance from \Cref{apx:remark_jnt_equv} gives \(\G\sim p_\theta(\cdot\mid\mathcal{P})\Rightarrow A\cdot \G\sim p_\theta(\cdot\mid A\mathcal{P})\), as claimed.
\end{proof}

\section{Approximate Joint Equivariance}\label{apx:remark_jnt_equv}
\Cref{thm:equiv} treats the hand joint vector \(q_h\) as invariant under \(A\). The conditional Real-NVP joint decoder (\Cref{sssec:hand_joint_decoder}) is conditioned on the rotation-invariant features \(\|z_O\|_2\), so in deterministic decoding its output is exactly invariant under input rotation. We confirm this numerically: the per-joint deviation across fresh inferences at each rotation is identically zero across 2000 \(\SO(3)\) samples (\Cref{tab:equivariance_binned}), so the structured output is SE(3)-equivariant in both wrist pose and joint angles up to floating-point precision. The wrist refinement used for penetration-free execution (\Cref{ssec:wristproj}) re-solves contact inverse kinematics against the rotated surface and is the only step that perturbs the joints, applied identically at every orientation.

\section{Grasp Synthesis Pipeline}\label{apx:graspsynth}%
We generate training data using FRoGGeR~\cite{li2023frogger}, a min-weight-metric grasp synthesizer for multi-fingered hands built on Drake~\cite{drake}. FRoGGeR samples initial wrist poses around the object, solves inverse kinematics via Drake's SNOPT, and then runs a nonlinear program that minimizes the worst-case force coefficient required for wrench resistance. The NLP enforces surface contact (equality), joint limits (inequality), collision avoidance (inequality), and optional finger-pad opposition constraints. Each successful grasp stores the 23-dimensional state vector \(q^*\) (the wrist pose as a quaternion and translation, plus the 16 Allegro Hand joint angles), the grasp map \(\graspmap \in \R^{6 \times 3M}\), friction-cone generator matrices \(F_{\mathrm{basis}} \in \R^{3 \times n_s M}\) (\(n_s\) generators per cone, a basis matrix distinct from the Cartesian force tensor \(F\) of \Cref{sec:problem}), LP-optimal force weights \(\alpha_{\mathrm{opt}} \in \R^{n_s M}\), and per-contact positions and normals in both the world and object frames. An adapter converts FRoGGeR output to an intermediate schema for the \equidexflow{} data loader, transforming contact positions to the object frame and extracting per-contact forces via \(f_i = F_{\mathrm{basis},i}\, \alpha_{\mathrm{opt},i}\), where \(F_{\mathrm{basis},i}\) is the \(i\)-th contact's friction-cone basis and \(\alpha_{\mathrm{opt},i}\) collects its LP-optimal weights (linear-cone coefficients, distinct from the local force coordinates \(\alpha_i\) of \Cref{sec:problem}). \Cref{fig:allegro_gallery} shows representative grasps from this pipeline.

\begin{figure}[htb]
\centering
\includegraphics[width=\textwidth]{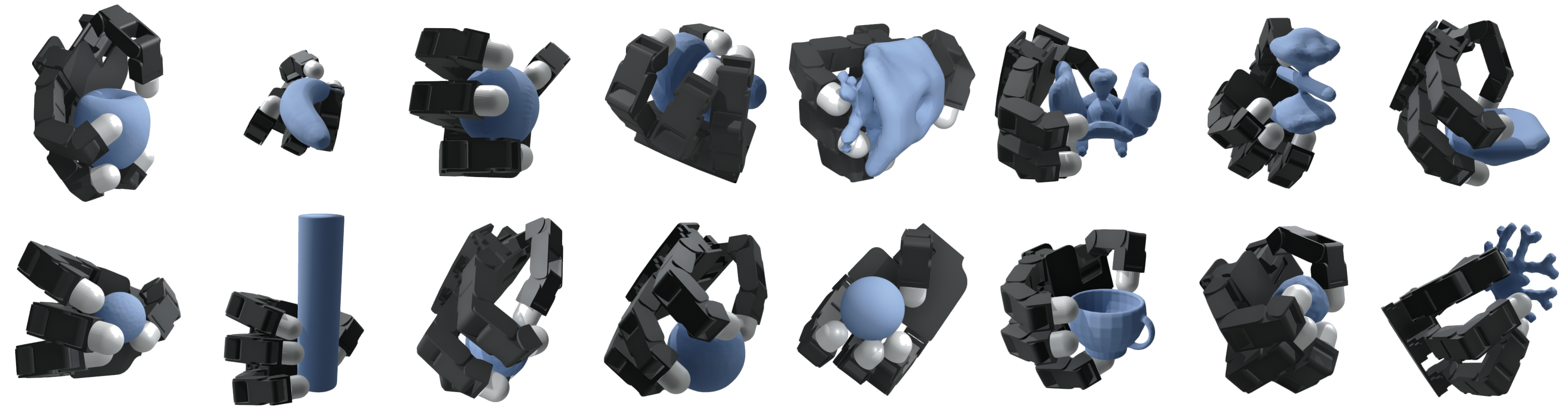}
\caption{\textbf{Allegro Hand Grasps Generated by \equidexflow{}.} Sixteen representative test-set objects spanning EGAD!, YCB, and the cylinder primitive (full inventory in \Cref{apx:dataprov}). Each grasp is decoded from the object point cloud by the \textsc{Full} model and seated on the surface by the wrist refinement of \Cref{ssec:wristproj}.}
\label{fig:allegro_gallery}
\end{figure}

\section{Dexterous Dataset Provenance}\label{apx:dataprov}
We extend the dataset summary of \Cref{sec:experiments} with the full per-record schema. The EGAD source~\cite{morrisonEGADEvolvedGrasping2020b} spans shape complexities 0 to 6 and grasp difficulties A to G. The four geometric primitives are the box, cube, cylinder, and sphere. Per-object generation yields 100 grasps at 100\% synthesis success across the full corpus. Each record stores the object point cloud (512 surface points in the object frame), wrist pose (\(4{\times}4\) SE(3) matrix), joint angles (16-DoF vector \(q\in\R^{16}\)), contact set (4 contact positions and inward surface normals, one per fingertip), contact forces (LP-optimal solutions from FRoGGeR's min-weight formulation under linearized friction cones, with constants in \Cref{tab:hyperparams}), and grasp quality (the \(\varepsilon\)-metric and the min-weight metric \(\ell^*\) from the grasp wrench space).

For analysis, we group the 81 objects into four functional categories by grasp-relevant geometry: \emph{convex} objects approximated by ellipsoids (sphere, cube, fruits, simple EGAD shapes A0 to B6), \emph{prismatic} objects with parallel faces (boxes, cans, EGAD C0 to D6), \emph{cylindrical} objects with a dominant axis of symmetry (bottles, cylinders), and \emph{irregular} objects lacking simple grasp affordances (complex EGAD shapes E0 to G6). This taxonomy, intermediate between coarse three-class schemes~\cite{li2023frogger} and fine-grained 20-class inventories~\cite{aktas_deep_2019}, captures the four distinct contact-placement strategies that the hand must employ. We split the corpus 80\,/\,10\,/\,10 into 6{,}480 training, 809 validation, and 811 test grasps with a fixed random seed (42). During training, we sample a rotation \(R_{\mathrm{aug}}\) uniformly from \(\SO(3)\) and apply it to the point cloud, wrist pose, contact points, contact normals, and forces simultaneously, preserving all geometric relationships.

Each record in \(\dataset\) is a tuple of hand, object, and task parameters, \(\smash{\bigl(\mathcal{P},\allowbreak {}^{O}T_w,\allowbreak q_h,\allowbreak C,\allowbreak \ell,\allowbreak N,\allowbreak F,\allowbreak \alpha,\allowbreak
p_{\mathrm{com}},\allowbreak m,\allowbreak \mu,\allowbreak s\bigr)}\). Here, \(\mathcal{P}\) is the object point cloud, \(\smash{{}^{O}T_w \in \SE(3)}\) is the wrist pose in the object frame, and \(q_h \in \R^{16}\) is the Allegro Hand configuration. The matrices \(C \in \R^{4 \times 3}\) and \(N \in \R^{4 \times 3}\) store contact positions and inward surface normals, respectively. The variable \(F\) denotes the per-contact force vector, \(\alpha\) is the local force coordinate in the contact frame, and \(\ell\) is the per-contact finger label. The quantities \(p_{\mathrm{com}} \in \R^3\) and \(m\) denote the object center of mass and mass in kilograms. The coefficient \(\mu\) is the friction coefficient used during synthesis, and \(s \in \{0,1\}\) is a success flag indicating survivability of the grasp under a bounded exogenous shear force applied at \(p_{\mathrm{com}}\). FRoGGeR's min-weight metric supplies LP-optimal contact forces over linearized friction cones with \(n_s = 4\) generators, \(\mu = 0.7\), and object mass \(m = 0.2\,\mathrm{kg}\). The YCB subset~\cite{calli2015ycb} contributes 28 of the 81 objects.

\section{Hyperparameters}\label{apx:hyperparams}
\Cref{tab:hyperparams} consolidates the architecture and training hyperparameters, and \Cref{tab:loss_components} reports the trained weighted-loss magnitudes for the \textsc{Full} model.

\begin{table}[htb]
\caption{\scshape \equidexflow{} Hyperparameters.}
\label{tab:hyperparams}
\centering
\small
\setlength{\tabcolsep}{1pt}
\renewcommand{\arraystretch}{0.72}
\begin{tabular}{@{}p{0.45\columnwidth} p{0.4\columnwidth}@{}}
\toprule
\textbf{Component} & \textbf{Value} \\
\midrule
\multicolumn{2}{@{}l}{\textit{Encoder (VN-DGCNN)}}\\
\quad VN-Linear widths / \(k\)-NN & \([1,21,21,42,85,170,341]\) / \(40\) \\
\quad Global feature shape & \(\R^{341\times 3}\) \(\bigl(z_O\bigr)\) \\
\midrule
\multicolumn{2}{@{}l}{\textit{SE(3) Flow Backbone}}\\
\quad VN-MLP widths / ODE steps & \([346,128,64,2]\) / MK-RK4 (10) \\
\quad CFG: \(p_{\mathrm{uncond}}\), \(\lambda_{\mathrm{cfg}}\) & \(0.1\), \(2.0\) \\
\midrule
\multicolumn{2}{@{}l}{\textit{Decoder Heads}}\\
\quad Contact VN / scalar MLP & \(341{\to}64{\to}4\) / \(1023{\to}\cdots{\to}4\) \\
\quad Surf.\ proj.\ \(\tau\) / train.\ friction \(\mu\) & \(0.005\) / \(0.5\) \\
\quad Normal / Force VN widths & \(342{\to}64{\to}1\) \\
\quad Joint: coupling layers / cond.\ dim / \(D\) & \(8\) (Real-NVP) / \(128\) / \(16\) \\
\midrule
\multicolumn{2}{@{}l}{\textit{Training}}\\
\quad Optim / lr / wd & Adam~\cite{kingma2015adam} / \(10^{-4}\) / \(10^{-6}\) \\
\quad Batch / epochs / steps per epoch & \(8\) / \(20\) / \(810\) \\
\quad Loss wts & {\raggedright \(1\;(\lambda_{\mathrm{flow}},\lambda_q,\lambda_N,\lambda_F\)),\; \(100\;(\lambda_C,\lambda_{\mathrm{coll}})\),\; \(0.1\;(\lambda_w,\lambda_\mu\))\par} \\
\quad Val / ckpt cadence (steps) & \(200\) / \(500\) \\
\quad GPU & RTX 5070 Ti (16\,GB) \\
\midrule
\multicolumn{2}{@{}l}{\textit{Inference and Ranking}}\\
\quad Candidates \(K\) / rank wts \(\beta_{1{:}4}\) & \(10\) / \(1,\,2,\,1,\,0.5\) \\
\midrule
\multicolumn{2}{@{}l}{\textit{Wrist Refinement (\Cref{ssec:wristproj})}}\\
\quad Steps / lr / \(w_{\mathrm{tr}}\) / \(w_{\mathrm{pen}}\) & \(200\) / \(0.02\) / \(0.05\) / \(50\) \\
\midrule
\multicolumn{2}{@{}l}{\textit{Evaluation}}\\
\quad \(n_s\) / \(\mu\) / mass & \(4\) / \(0.7\) / \(0.2\)\,kg \\
\bottomrule
\end{tabular}
\end{table}

\section{Physics Validation, Grasp Synthesis, and Hardware Execution}\label{apx:supp}
We complement the static contact-quality metrics of \Cref{tab:results} with two \emph{independent} physics-based tests of decoded grasps: a shake test analogous to the GAGrasp force-perturbation-based grasp validation protocol, and a lift test, both implemented in Drake's discrete \texttt{MultibodyPlant} using the Semi-Analytic Primal (SAP) and lagged contact models. In both tests, we sample grasps from \equidexflow{} and pass each grasp's wrist pose and configuration directly to the simulator without retargeting. Below, we summarize each test, focusing on the lift test, and refer the reader to the original GAGrasp paper~\cite{gagrasp} for the shake test's details:

\textbf{Shake (Benchmark):} We adapt the GenDexGrasp/GAGrasp protocol \cite{gagrasp} to Drake with gravity off, applying a \(\pm xyz\) force (\(0.5\,\mathrm{m/s^2}\cdot \mathrm{mass}\), \(\mu=10\)) along all six axes and passing a grasp when the object drifts \(<\SI{2}{cm}\) in every direction. We use this to recover a widely used static-robustness number for comparison. 

\textbf{Lift (This Work):} To assess the physical stability of decoded \equidexflow{} grasps, we add a gravity-enabled test of whether the hand carries the object away and holds it under a lateral disturbance, which is a setting not covered in the shake test. We detail this test next\footnote{Videos available at \href{https://equidexflow.github.io}{\texttt{equidexflow.github.io}}.}.

\subsection{Lift Test Details}\label{ssec:apxlifttest}
We conduct a lift test under lateral perturbations following the protocol exposited below:

\textbf{Mounting and Raise:} We replace the hand's would-be floating base with an actuated prismatic joint along world \(+z\), pinned at the grasp pose so the palm sits at the decoded wrist pose at zero displacement. We use a single DoF because it admits an in-solver implicit-PD actuator, whereas a 6-DoF floating joint does not and an external floating-base PD destabilizes the contact solver. We then raise the object by tracking the smoothstep \(z(t)=H\,(3u^2-2u^3)\), \(u=t/T\), with \(H=\SI{12}{cm}\) and zero endpoint velocity, holding gravity on throughout.

\textbf{Grip from the Model's Predicted Forces:} We drive the grip from the model's own per-finger contact forces rather than an arbitrary closing offset. \equidexflow{} predicts a contact point and a contact force per finger, and we verified from the dataset labels and the decoder implementation that these forces live in the object frame, in newtons, and act compressively along the inward surface normal, sized to balance the object weight quasi-statically. We assign each predicted contact to its nearest fingertip body and apply the feedforward torque \(\tau=\sum_i J_i^{\!\top} f_i\) (with \(J_i\) the contact Jacobian) together with a position-hold PD at the decoded configuration, mapping the object-frame contacts and forces into world through the object pose.

\textbf{Lateral Perturbation and Scoring:} After the raise, we hold the object at full height and ramp a single-axis lateral force along world \(+x\) to each of \(1~\mg\), \(3~\mg\), and \(6~\mg\), easing the ramp so the transient does not fling light objects. We grade a grasp by the largest level at which it retains the object, requiring object-to-hand slip below \SI{3}{cm}, rotation below \(30^\circ\), and at least two contacts throughout. Note that since the synthesis module already penalizes penetration by including a scene-aware pad constraint that bounds pad-to-object penetration to \(\le\SI{1}{mm}\), we do not explicitly score the decoded grasp by penetration.

\textbf{Lift Results:} A Drake protocol that implements the prismatic-Z raise and the \(\{1,3,6\}\mg\) lateral perturbation runs on the LEAP Hand. We defer per-grasp numerical outcomes while we re-tune the closing controller and the fingertip contact stiffness for the LEAP pad geometry. The shake validator of \Cref{ssec:supp_shake_results} is the active hardware-admission gate in this work.

\textbf{Hardware Confirmation:} All six hardware test objects completed an open-loop pick-and-hold on the physical LEAP-ZArm workcell: the symmetric pair (the cylinder primitive and tennis ball) and the asymmetric four (the cube, box primitive, potted meat can, and mustard bottle). The shake validator remains a conservative pre-execution screen, and the deferred Drake lift test above will quantify the post-lift retention margin it approximates.

\subsection{Arm-Aware Grasp Synthesis}\label{sec:supp_armaware}
The hardware reachability study of \Cref{ssec:hardware} finds that the dominant constraint on physical execution is not finger placement but arm reachability: the heuristic initial-condition (IC) sampler inherited from the underlying FRoGGeR generator~\cite{li2023frogger} draws palm poses around the object centroid without reference to the arm's fixed workspace, so a non-negligible fraction of candidates is hand-feasible yet kinematically out of reach for the 6-DoF arm. We extend the FRoGGeR IC pipeline with a reachability filter that wraps the existing heuristic samplers for both the LEAP and Allegro hands. A scene specification encodes the worktable height, the arm-base placement, and a joint-limit-aware reach envelope. A ZArm reachability check screens each candidate world-frame palm pose \(\smash{{}^{W}T_P}\) (palm frame \(P\)) by solving for arm joints that reach the candidate without self-collision or table interference. Rejected poses re-enter the sampler, and the per-finger placement heuristics on the hand side stay untouched. The wrapper is order-preserving: any palm pose the unrestricted sampler would have generated is accepted unless the arm cannot reach it, so the grasp distribution from \equidexflow{} is preserved while the unreachable tail is excised. \Cref{fig:supp_leap_gallery} renders the LEAP Hand at the synthesis solution on twelve objects spanning the YCB and EGAD categories, retargeted from Allegro to LEAP via contact-point IK. We omit the arm and workcell fixtures from this view so the contact patch on each object stays visible. \Cref{tab:supp_leap_sim_results} reports arm reachability separately.

\begin{figure}[htb]
\centering
\setlength{\tabcolsep}{2pt}
\renewcommand{\arraystretch}{0.9}
\newcommand{\gpn}[4]{%
  \begin{tabular}{@{}c@{}}%
    {\footnotesize #2}\\[-1pt]%
    \includegraphics[height=20mm]{figures/cropped/supplementary/leap_gallery/#1.png}\\[-1pt]%
    {\fontsize{5.6}{6.4}\selectfont\sffamily\(\ell^*\)\,#3\,/\,\(\rho_{\mathrm{pen}}\)\,#4}%
  \end{tabular}%
}
\newcommand{\gpnr}[5]{%
  \begin{tabular}{@{}c@{}}%
    {\footnotesize #2}\\[-1pt]%
    \includegraphics[height=20mm, angle=#5]{figures/cropped/supplementary/leap_gallery/#1.png}\\[-1pt]%
    {\fontsize{5.6}{6.4}\selectfont\sffamily\(\ell^*\)\,#3\,/\,\(\rho_{\mathrm{pen}}\)\,#4}%
  \end{tabular}%
}
\begin{tabular}{@{}cccc@{}}
\gpn{graspit_box__v0}{Box Primitive}{0.024}{0.87}        & \gpn{sns_cup__v0}{SNS Cup}{0.040}{0.15}              & \gpn{sugar_box__v0}{Sugar Box}{0.025}{0.00}          & \gpn{tennis_ball__v90}{Tennis Ball}{0.027}{0.32} \\
\gpn{tomato_soup_can__v0}{Tomato Soup Can}{0.019}{0.88} & \gpn{apple__v0}{Apple}{0.026}{0.00}                  & \gpn{egad_G6__v90}{EGAD! G6}{0.042}{0.00}             & \gpnr{egad_E5__v90}{EGAD! E5}{0.023}{0.00}{90} \\
\gpn{softball__v0}{Softball}{0.029}{0.52}              & \gpn{egad_E4__v90}{EGAD! E4}{0.019}{0.00}             & \gpn{egad_A4__v0}{EGAD! A4}{0.021}{0.54}              & \gpn{egad_F6__v0}{EGAD! F6}{0.023}{0.31} \\
\end{tabular}
\caption{\textbf{LEAP Grasp Gallery with Per-Grasp Metrics.} Each render shows the LEAP Hand at the synthesis solution after Allegro-to-LEAP contact-point retargeting for one of twelve objects from the primitive, YCB, and EGAD! categories. The two numbers under each panel report (i) the min-weight grasp metric \(\ell^*\)~\cite{li2023frogger} at the synthesis configuration (higher is better, and the FRoGGeR generator maximizes it), and (ii) the worst pad-to-object penetration \(\rho_{\mathrm{pen}}\) (mm) observed across the four fingertips, bounded above by the scene-aware pad constraint at \SI{1}{mm}. The arm and workcell appear in the kinematic-gate table (\Cref{tab:supp_leap_sim_results}), not in this view, so the contact patch on each object stays visible.}
\label{fig:supp_leap_gallery}
\end{figure}

\subsection{Grasp Validation}
Two screens stand between a synthesized grasp and physical realization on hardware: a kinematic lift gate and a physics-based validator.

\textbf{Kinematic Reachability and Lift-Gate Validation:}\label{ssec:supp_lift_gate}
For each grasp accepted by the arm-aware sampler, we re-solve the ZArm IK along a \SI{20}{cm} world-\(+z\) smoothstep chain of 20 waypoints anchored at the synthesis palm pose. A grasp passes the gate when every waypoint is reachable under the scene fixtures of \Cref{sec:supp_armaware}, retains manipulability \(w(q) = \sqrt{\det(J(q)\,J(q)^{\top})} > w_{\min}\), and keeps every arm joint at distance \(\ge \theta_{\min}\) from its limit. A grasp that passes the static FRoGGeR feasibility check can still terminate against a wrist singularity or pin a joint to its stop early in the lift. The gate flags these cases without invoking a physics simulator and excises them before they reach hardware. \Cref{tab:supp_leap_sim_results} reports the minimum manipulability and minimum joint margin attained along the lift trajectory for the 18 validation grasps. All 18 clear the gate. The lift-gate worst case is the master chef can at \(0^\circ\) (\(w_{\min} = 0.010\), \(\theta_{\min} = 22.2^\circ\)). The worst joint margin sits at the shoulder under the can's tall, wide footprint, which is the same failure mode that motivates the gate.

\begin{table}[htb]
\centering
\small
\setlength{\tabcolsep}{6pt}
\renewcommand{\arraystretch}{1.05}
\caption{\scshape Lift-Gate Validation on the Hardware Object Set. Minimum manipulability \(\sqrt{\det(J\,J^{\top})}\) and minimum arm-joint margin observed along the \SI{20}{cm} world-\(+z\) lift, per grasp. The gate accepts a grasp when both quantities exceed their respective thresholds at every waypoint. All 18 grasps clear the gate.}
\label{tab:supp_leap_sim_results}
\begin{tabular}{llcc}
\toprule
Object & Yaw & Min Manipulability & Min Joint Margin (\(^\circ\)) \\
\midrule
Cube             & \(0^\circ\)   & 0.019 & 71.3 \\
                 & \(120^\circ\) & 0.022 & 65.9 \\
Box Primitive    & \(0^\circ\)   & 0.022 & 63.1 \\
                 & \(120^\circ\) & 0.020 & 78.8 \\
Cylinder Primitive & \(0^\circ\)   & 0.019 & 69.9 \\
                 & \(120^\circ\) & 0.016 & 39.4 \\
Master Chef Can  & \(0^\circ\)   & 0.010 & 22.2 \\
                 & \(120^\circ\) & 0.011 & 26.7 \\
Mustard Bottle   & \(0^\circ\)   & 0.017 & 59.3 \\
                 & \(120^\circ\) & 0.015 & 44.7 \\
Potted Meat Can  & \(0^\circ\)   & 0.016 & 62.5 \\
                 & \(120^\circ\) & 0.016 & 60.9 \\
Sphere           & \(0^\circ\)   & 0.016 & 34.5 \\
                 & \(120^\circ\) & 0.021 & 48.4 \\
Tennis Ball      & \(0^\circ\)   & 0.018 & 49.2 \\
                 & \(120^\circ\) & 0.016 & 41.3 \\
Tomato Soup Can  & \(0^\circ\)   & 0.019 & 77.4 \\
                 & \(120^\circ\) & 0.013 & 63.1 \\
\midrule
\textbf{Pass Rate} & & \multicolumn{2}{c}{18 / 18} \\
\bottomrule
\end{tabular}
\end{table}

\textbf{Physics-Based Validator and Hardware Shortlist:}\label{ssec:supp_shake_results}
The kinematic lift gate filters grasps on arm-side feasibility. It says nothing about whether the contact patch survives realistic physics. We close that loop with the GenDexGrasp/GAGrasp shake protocol in Drake under hydroelastic contact, and treat it as a \emph{validator} that shortlists grasps for hardware. A grasp earns admission only if it retains the object under a \(\pm xyz\) force (\(0.5\,\mathrm{m/s^2}\!\cdot\!\mathrm{mass}\), \(\mu=10\)) along all six axes, with worst-axis drift below \SI{2}{cm}. The validator runs the entire 18-grasp set from \Cref{tab:supp_leap_sim_results} and returns a per-grasp verdict. \Cref{tab:supp_leap_shake_results} lists the per-grasp drift, rotation, contact count, and shortlist flag. The validator admits twelve of the eighteen kinematically reachable grasps. The screened-out subset flags configurations whose static FRoGGeR certificate is positive yet whose realized contact patch cannot resist a multi-axis load in simulation, making the validator a conservative complement to the kinematic gate.

\begin{table}[htb]
\centering
\caption{\scshape Physics Validator Output and Hardware Shortlist. Worst-axis object drift and rotation across the six \(\pm xyz\) directions of the GenDexGrasp/GAGrasp protocol, with the minimum contact count maintained during the test. A grasp is admitted when every direction drifts \(<\SI{2}{cm}\). Rows marked ``--'' did not establish a stable contact patch under the validator's load. Twelve of the eighteen kinematically reachable grasps are admitted.}
\label{tab:supp_leap_shake_results}
\small
\setlength{\tabcolsep}{6pt}
\renewcommand{\arraystretch}{1.05}
\begin{tabular}{llcrrc}
\toprule
Object & Yaw & Shortlist & Max Drift (mm) & Max Rot (\(^\circ\)) & Contacts \\
\midrule
Cube             & \(0^\circ\)   & \ding{55}      & --     & --    & 0 \\
                 & \(120^\circ\) & \ding{51}      & 0.4    & 0.3   & 4 \\
Box Primitive    & \(0^\circ\)   & \ding{51}      & 2.4    & 2.4   & 4 \\
                 & \(120^\circ\) & \ding{51}      & 9.9    & 8.7   & 3 \\
Cylinder Primitive & \(0^\circ\)   & \ding{51}      & 1.4    & 2.0   & 3 \\
                 & \(120^\circ\) & \ding{51}      & 1.3    & 2.6   & 2 \\
Master Chef Can  & \(0^\circ\)   & \ding{51}      & 0.8    & 0.7   & 3 \\
                 & \(120^\circ\) & \ding{55}      & 43.1   & 23.9  & 2 \\
Mustard Bottle   & \(0^\circ\)   & \ding{51}      & 3.2    & 2.8   & 3 \\
                 & \(120^\circ\) & \ding{51}      & 3.4    & 3.3   & 4 \\
Potted Meat Can  & \(0^\circ\)   & \ding{51}      & 0.9    & 0.9   & 4 \\
                 & \(120^\circ\) & \ding{51}      & 9.2    & 8.3   & 4 \\
Sphere           & \(0^\circ\)   & \ding{51}      & 1.4    & 1.4   & 3 \\
                 & \(120^\circ\) & \ding{55}      & --     & --    & 0 \\
Tennis Ball      & \(0^\circ\)   & \ding{55}      & 56.3   & 58.1  & 2 \\
                 & \(120^\circ\) & \ding{51}      & 8.7    & 9.3   & 3 \\
Tomato Soup Can  & \(0^\circ\)   & \ding{55}      & --     & --    & 0 \\
                 & \(120^\circ\) & \ding{55}      & --     & --    & 0 \\
\midrule
\textbf{Hardware Shortlist} & & \multicolumn{4}{c}{12 / 18 admitted} \\
\bottomrule
\end{tabular}

\end{table}

\subsection{Equivariant Allegro-to-LEAP Retargeting}\label{sec:supp_retarget}
The hardware experiments of \Cref{ssec:hardware} execute \equidexflow{} on the LEAP Hand by retargeting decoded Allegro fingertip contacts through LEAP inverse kinematics. We document the per-finger correspondence and the residual error budget that underlie this step, neither of which is detailed in \Cref{ssec:hardware}.

\textbf{Finger Correspondence:} The retarget treats each fingertip as an end-effector with the contact target expressed in the object frame. Allegro's \texttt{algr\_rh\_\{if,mf,rf,th\}\_ds} distal links are matched to LEAP's \texttt{fingertip}, \texttt{fingertip\_2}, \texttt{fingertip\_3}, and \texttt{thumb\_fingertip} respectively, with the wrist pose held fixed at the value selected by the arm-aware sampler of \Cref{sec:supp_armaware}.

\textbf{Residual Error Budget:} \Cref{tab:supp_retarget_residual} reports the per-finger tip residual on the \equidexflow{} object set. A naive joint-space copy that matches Allegro's four-DoF-per-finger joints to their nearest LEAP analogues leaves a \SIrange{29}{54}{\milli\meter} residual at the fingertips, because the two hands' phalanx lengths and abduction axes diverge. Per-finger IK closes this gap to \SIrange{5}{13}{\milli\meter}, consistent with the \SIrange{14.3}{14.5}{\milli\meter} mean residuals reported in the hardware reachability table, where the additional millimeters reflect contact-IK standoff against the Ninjaflex pad surface.

\begin{table}[htb]
\centering
\small
\setlength{\tabcolsep}{8pt}
\caption{\scshape Allegro-to-LEAP Retarget Residuals. Fingertip residual ranges across the \equidexflow{} object set. The joint-space copy serves as a sanity baseline. The LEAP grasps executed in the hardware-experiments section use per-finger IK.}
\label{tab:supp_retarget_residual}
\begin{tabular}{lc}
\toprule
Retarget Strategy & Mean Fingertip Residual (mm) \\
\midrule
Naive Joint Copy   & \numrange{29}{54} \\
Per-Finger IK      & \numrange{5}{13}  \\
\bottomrule
\end{tabular}
\end{table}

\textbf{Equivariance \& Grasp Retargeting:} The retargeting procedure is not \(\SO(3)\)-equivariant by construction. It solves IK against fingertip positions whose equivariance is inherited from the upstream \equidexflow{} decoder, and the LEAP kinematic chain is non-isomorphic to Allegro's, so the on-hand equivariance residual is bounded by the synthesis-side residual of \Cref{ssec:eq_guarantee} plus an IK-induced term. We defer a native LEAP equivariance study to future work, consistent with the limitations in \Cref{par:limitations}.

\subsection{Perception-in-the-Loop Execution}\label{sec:supp_perception} Our hardware executor consumes an externally supplied object pose rather than relying on a hand-placed object. We store a per-grasp perception block alongside the decoded grasp metadata so the executor can recompute the palm target at runtime from a perceived pose without rerunning fingertip contacts or arm trajectories. Let \(\smash{{}^{W}T_O^{q^\star}}\) and \(\smash{{}^{W}T_P^{q^\star}}\) denote the world-frame object and palm (frame \(P\)) poses at the synthesis solution \(q^\star\). We precompute and store the rigid offset
\begin{equation}
{}^{O}T_P \;:=\; \bigl({}^{W}T_O^{q^\star}\bigr)^{-1} \, {}^{W}T_P^{q^\star},
\end{equation}
so the perceived palm target at execution time is \(\smash{{}^{W}T_P^{\,\mathrm{perc}} \;=\; {}^{W}T_O^{\,\mathrm{perc}} \, {}^{O}T_P,}\)
and the IK target on the arm tool flange is \(\smash{{}^{W}T_{\mathrm{tool0}} \;=\; {}^{W}T_P^{\,\mathrm{perc}} \, \bigl({}^{P}T_{\mathrm{tool0}}\bigr)^{-1},}\)
with \(\smash{{}^{P}T_{\mathrm{tool0}}}\) the calibrated palm-to-flange offset. The arm-constrained execution problem is separate from the object-relative generation pipeline. For an estimated object pose \(\smash{{}^{W}T_O}\), a sampled relative grasp induces the wrist target \(\smash{{}^{W}T_w = {}^{W}T_O\,{}^{O}T_w}.\) An executable grasp must further satisfy workspace and configuration constraints, which we capture with the set
\begin{multline}
\mathcal{A}({}^{O}G;{}^{W}T_O,\mathcal{E})
= \bigl\{(q_a,q_h) \bigm| \mathrm{FK}_{\mathrm{arm}}(q_a) = {}^{W}T_O\,{}^{O}T_w,
 q_a \in \mathcal{Q}_a,\; q_h \in \mathcal{Q}_h,\; \mathrm{Coll}(q_a,q_h,\mathcal{E}) = 0 \bigr\},
\label{eq:rb_feas_set}
\end{multline}
where \(\mathcal{E}\) contains the robot base, table, obstacles, joint limits, and approach constraints. Arm-aware execution is therefore not SE(3)-equivariant, because the robot base and environment define a fixed world frame.

\textbf{Pose Source:} Our hardware executor is agnostic to the provenance of \(\smash{{}^{W}T_O^{\,\mathrm{perc}}}\). A fiducial-aligned target or a single-shot pose estimator both satisfy the interface, since the perception pipeline serves as a single-shot pose advisory before grasp execution rather than closed-loop visual servoing, which we defer to future work.

\textbf{Close Strategies:} To close the fingers and establish the grasp, we select hand-closure timing per object class. Soft and compliant objects (e.g., the mustard bottle and the tennis ball) use a during-approach closure, in which the finger joints interpolate toward the decoded configuration concurrently with the final approach-path segment, exploiting deformation to absorb residual pose error. Rigid objects (cube, box primitive, potted meat can, cylinder primitive) use an after-arrival closure that holds the open pre-grasp configuration until the palm reaches the target and only then commands the decoded joints, avoiding premature contacts that would displace the object.

\subsection{Hardware Object Set and Execution Protocol}\label{sec:supp_hw_protocol}
\Cref{tab:supp_objects} lists the six objects used in our hardware experiments, their source, the rotational symmetry class, the angles evaluated, and the close strategy. We evaluate asymmetric objects (mustard bottle, cube, box primitive, potted meat can) at \(0^\circ\) and \(120^\circ\) about the vertical axis (\Cref{fig:pedestal_jig}), matching the convention of the hardware reachability table. We evaluate the symmetric objects (cylinder primitive, tennis ball) at the canonical pose only: grasps under any \(\SO(2)\)-axisymmetric or \(\SO(3)\)-isotropic rotation are pose-equivalent, so a second angle does not exercise the executor's equivariance handling. We class the cube as asymmetric under the \(120^\circ\) test because its rotational symmetry group is generated by \(90^\circ\) rotations, and a \(120^\circ\) rotation yields a distinct pose. The executor protocol omits the cylinder primitive from rotation testing on the same symmetry grounds.

\begin{figure}[htb]
\centering
\includegraphics[width=.8\columnwidth]{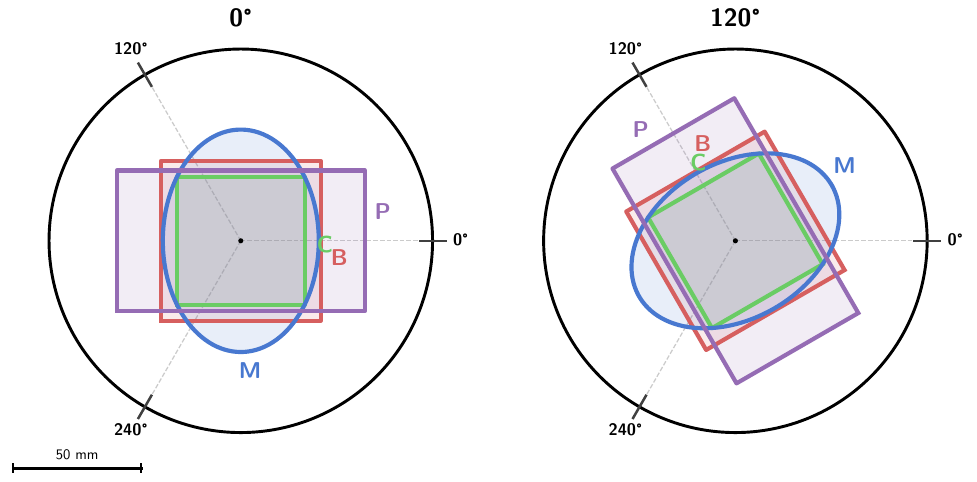}
\caption{\textbf{Pedestal Registration Jig.} Object footprints on the \SI{150}{mm}-diameter cylindrical pedestal at the canonical pose (\emph{left}, \(0^\circ\)) and after a \(120^\circ\) rotation about the vertical axis (\emph{right}). The four outlines co-rotate rigidly between panels, and \(B/C/M/P\) denote the box primitive, cube, mustard bottle, and potted meat can. Printed at full scale and affixed to the pedestal visible in \Cref{fig:hardware}, the jig fixes repeatable object placement at both poses. The symmetric objects (cylinder primitive, tennis ball) are placed at the canonical pose only and are thus omitted.}
\label{fig:pedestal_jig}
\end{figure}

\begin{table}[htb]
\centering
\small
\caption{\scshape Hardware Object Set. Symmetric objects are evaluated at the canonical pose only. Asymmetric objects are evaluated at \(0^\circ\) and \(120^\circ\) about the vertical axis, matching the convention of the hardware reachability table.}
\label{tab:supp_objects}
\setlength{\tabcolsep}{4pt}
\renewcommand{\arraystretch}{0.85}
\begin{tabular}{llllc}
\toprule
Object & Source & Symmetry & Angles & Close Strategy \\
\midrule
Mustard Bottle    & YCB 006    & Asymmetric                & \(0^\circ,\,120^\circ\) & During Approach \\
Cube              & Synthetic  & Asymmetric                & \(0^\circ,\,120^\circ\) & After Arrival   \\
Box Primitive     & GraspIt!   & Asymmetric                & \(0^\circ,\,120^\circ\) & After Arrival   \\
Potted Meat Can   & YCB 010    & Asymmetric                & \(0^\circ,\,120^\circ\) & After Arrival   \\
Cylinder Primitive & GraspIt!   & Axisymmetric \(\SO(2)\)   & Canonical               & After Arrival   \\
Tennis Ball       & YCB 056    & Isotropic \(\SO(3)\)      & Canonical               & During Approach \\
\bottomrule
\end{tabular}
\end{table}

\textbf{Open-Loop Execution Protocol:} The executor runs an open-loop pre-grasp, approach, close, lift, hold sequence consistent with prior dexterous-grasping hardware studies~\cite{li2023frogger,wang2023dexgraspnet,zhang_dexgraspnet_2024}. The arm moves to a pre-grasp pose obtained by retracting the perceived palm target \SI{5}{cm} along the palm approach axis, with the hand pre-shaped to the decoded open configuration. The approach segment linearly interpolates the wrist to the palm target. The close stage commands the decoded joint configuration under the object-class strategy in \Cref{sec:supp_perception}. The lift stage raises the arm by \SI{15}{cm} along world \(+z\), and a \SI{3}{s} hold concludes the trial. We tune per-segment durations and an optional inward squeeze offset per object.

\textbf{Success Criterion:} We record a trial as a success if the object remains stably grasped, with no slip or drop, through the \SI{3}{s} hold after lift. The object dropping during lift or hold counts as a failure. We report a partial-success category, in which the object lifts but slips during the hold, separately when relevant.

\textbf{Results Summary from Hardware Trials:} The hardware reachability table (\Cref{tab:hw_reach}) is a pre-execution, computational result: it reports which retargeted grasps clear the 6-DoF arm's workspace at each orientation (\(4/4\) at both \(0^\circ\) and \(120^\circ\)). The trials here are its post-execution complement: which of those predicted-reachable grasps realized a stable pick on our hardware platform. All four asymmetric objects (the cube, box primitive, potted meat can, and mustard bottle) completed the full pre-grasp, approach, close, lift, and \SI{3}{s} hold without slip or drop at both the canonical and \(120^\circ\) vertical-axis poses, and the symmetric cylinder primitive and tennis ball completed the same sequence at the canonical pose, for ten successful trials across the six objects. That every asymmetric object retains the object at both orientations is the hardware counterpart to the equivariant execution analyzed in \Cref{ssec:hardware}: the canonical grasp and its \(120^\circ\) co-transform both complete the pick and the hold. The trials thus realize the full reachable set predicted in \Cref{tab:hw_reach}.

\textbf{Known Limitations:} Execution is open-loop and uses single-shot perception correction rather than closed-loop visual servoing. The LEAP Hand's Ninjaflex pads can edge-load on small or sharply curved surfaces, and we chose the object set to remain within the hand's effective grasp aperture. We enforce equivariance on the synthesis side and inherit it at execution through the retarget. We defer quantifying on-hand recovery of the synthesis-side equivariance residual (\Cref{ssec:eq_guarantee}) to future work.

\end{document}